\documentclass{tlp}
\usepackage{url,graphicx,latexsym,amssymb}

\newcommand{\ST}{\textup{\textsf{ST}}}
\newcommand{\otter}{\textsc{otter}}
\newcommand{\mace}{\textsc{mace-2}}
\newtheorem{lemma}{Lemma}
\newtheorem{theorem}{Theorem}
\newtheorem{definition}{Definition}

\title{Checking the Quality of Clinical Guidelines using Automated Reasoning Tools}

\author[Arjen Hommersom, Peter J.F. Lucas, and Patrick van Bommel]
{ARJEN HOMMERSOM, PETER J.F. LUCAS, and PATRICK VAN BOMMEL \\
Section on Model-based System Development \\
Institute for Computing and Information Sciences \\
Radboud University Nijmegen \\
PO Box 9010, 6500 GL Nijmegen \\ 
The Netherlands \\
\email{\{arjenh,peterl,pvb\}@cs.ru.nl}}

\submitted {22 August 2006}
\revised {17 July 2007, 20 April 2008}
\accepted {19 May 2008}

\begin{document}

\maketitle

\begin{abstract}
  \noindent
  Requirements about the quality of clinical guidelines can be
  represented by schemata borrowed from the theory of abductive
  diagnosis, using temporal logic to model the time-oriented aspects
  expressed in a guideline. Previously, we have shown that these
  requirements can be verified using interactive theorem proving
  techniques. In this paper, we investigate how this approach can be
  mapped to the facilities of a resolution-based theorem prover,
  \otter, and a complementary program that searches for finite models
  of first-order statements, \mace. It is shown that the reasoning
  required for checking the quality of a guideline can be mapped to
  such fully automated theorem-proving facilities. The medical quality
  of an actual guideline concerning diabetes mellitus 2 is
  investigated in this way.
\end{abstract}

\begin{keywords}
automated reasoning, clinical guideline, temporal logic, abduction
\end{keywords}

\section{Introduction}

Health-care is becoming more and more complicated at an astonishing
rate. On the one hand, the number of different patient management
options has risen considerably during the last couple of decades,
whereas, on the other hand, medical doctors are expected to take
decisions balancing benefits for the patient against financial costs.
There is a growing trend within the medical profession to believe that
clinical decision-making should be based as much as possible on sound
scientific evidence; this has become known as \emph{evidence-based
medicine} \cite{woolf2000}. Evidence-based medicine has given a
major impetus to the development of clinical \emph{guidelines}, documents
offering a description of steps that must be taken and considerations
that must be taken into account by health-care professionals in
managing a disease in a patient, to avoid substandard practices or
outcomes.  Their general aim is to promote standards of medical care.
Clinical \emph{protocols} have a similar aim as clinical guidelines,
except that they offer more detail, and are often local, more
detailed version of a related clinical guideline.

Researchers in artificial intelligence (AI) have picked up on these
developments by designing guideline modelling languages, for instance
PRO\emph{forma} \cite{fox2000} and GLIF3 \cite{Peleg:2000AMIA} that
may be useful in developing computer-based representations of
guidelines.  Some of them, for example in the Asgaard project
\cite{shahar98}, in the CREDO project \cite{fox2006} and the GLARE
project \cite{TerenzianiAIMED2001,TerenzianiAMIA2003}, are also
involved in the design of tools that support the deployment of
clinical guidelines.  These languages and tools have been evolving
since the 1990s, a process that is gaining momentum due to the
increased interest in guidelines within the medical community.  AI
researchers see guidelines as good real-world examples of highly
structured, systematic documents that are amenable to formalisation.

Compared to the amount of work that has been put into the
formalisation of clinical guidelines, verification of guidelines has
received relatively little attention.  In \cite{Shiffman:1994MDM},
logic was used to check whether a set of recommendations is complete,
to find out whether or not the recommendations are logically
consistent, and to recognise ambiguous rules if they are present.
Checking the consistency of temporal scheduling constraints has also
been investigated \cite{Duftschmid:2002AIM}. Most of the work done in
the area of formal verification of clinical guidelines, i.e., proving
correctness properties using formal methods, is of more recent years,
e.g., as done in the Protocure
project\footnote{\url{http://www.protocure.org} [Accessed: 21 May 
2008]} with the use of
interactive theorem proving \cite{HommersomIEEE2007,Teije:2006AIM} and
model checking \cite{Baumler:2006SPIN,GrootAIME2007}.

This paper explores the use of \emph{automated deduction} for the
verification of clinical guidelines. For the rapid development of good
quality guidelines it is required that guidelines can be at least
partially verified automatically; unfortunately, as of yet, there are
no verification methods that can be readily used by guideline
developers.  Previously, it was shown that for reasoning about models
of medical knowledge, for example in the context of medical expert
systems \cite{lucas93}, classical automated reasoning techniques
(e.g., \cite{robinson65,wos84}) are a practical option.  Important for
the reasoning about knowledge in clinical guidelines is its temporal
nature; time plays a part in the physiological mechanisms as well as
in the exploration of treatment plans. As far as we know, the
application of automated reasoning techniques to guideline knowledge
has as yet not been investigated. The guideline we studied to this
purpose has a structure similar to other guidelines and the
verification principles used have sufficient generality. Thus, the
results of the study go beyond the actual guideline studied.

There are two approaches to checking the quality of clinical
guidelines using formal methods: (1) the \emph{object-level} approach
amounts to translating a guideline to a formal language, such as Asbru
\cite{shahar98}, and subsequently applying program verification or
logical methods to analyse the resulting representation for
establishing whether certain domain-specific properties hold; (2) the
\emph{meta-level} approach, which consists of formalising general
requirements to which a guideline should comply, and then
investigating whether this is the case. Here we are concerned with the
meta-level approach to guideline-quality checking.  For example, a
good-quality clinical guideline regarding treatment of a disorder
should preclude the prescription of redundant drugs, or advise against
the prescription of treatment that is less effective than some
alternative.  An additional goal of this paper is to establish how
feasible it is to implement such meta-reasoning techniques in existing
tools for automated deduction for the purpose of quality checking of a
clinical guideline.  

Previously, we have shown that the theory of abductive diagnosis can
be taken as a foundation for the formalisation of quality criteria of
a clinical guideline \cite{lucas2003} and that these can be verified
using (interactive) program verification techniques
\cite{HommersomIEEE2007}. In this paper, we provide an alternative to
this approach by translating this formalism, a restricted part of
temporal logic, to standard first-order logic.  Furthermore, the
quality criteria are interpreted in such a way that they can be stated
in terms of a monotonic entailment relation.  We show that, because of
the restricted language needed for the formalisation of the guideline
knowledge, the translation is a relatively simple fragment of
first-order logic which is amended to automated verification.  Thus,
we show that it is indeed possible, while not easy, to cover the route
from informal medical knowledge to a logical formalisation and
automated verification. 

The meta-level approach that is used here is particularly important
for the \emph{design} of clinical guidelines, because it corresponds
to a type of reasoning that occurs during the guideline development
process. Clearly, quality checks are useful during this process;
however, the design of a guideline can be seen as a very complex
process where formulation of knowledge and construction of conclusions
and corresponding recommendations are intermingled.  This makes it
cumbersome to do \emph{interactive} verification of hypotheses
concerning the optimal recommendation during the construction of such
a guideline, because guideline developers do not generally have the
necessary background in formal methods to construct such proofs
interactively. Automated theorem proving could therefore be
potentially more beneficial for supporting the guideline development
process.

The paper is organised as follows. In the next section, we start by
explaining what clinical guidelines are, and a method for formalising
guidelines by temporal logic is briefly reviewed. In Section
\ref{abductive} the formalisation of guideline quality using a
meta-level scheme that comes from the theory of abductive diagnosis
is described.  The guideline on the management of diabetes mellitus
type 2 that has been used in the case study is given attention in
Section \ref{guideline}, and a formalisation of this is given as well.
An approach to checking the quality of this guideline using the
reasoning machinery offered by automated reasoning tools is presented in
Section \ref{ar}.  Finally, Section \ref{conclusions} discusses what
has been achieved, and the advantages and limitations of this approach
are brought into perspective. In particular, we will discuss the role
of automated reasoning in quality checking guidelines in comparison to
alternative techniques such as model checking and interactive
verification.

\section{Framework}

In this section, we review the basics about clinical guidelines and
the temporal logic used in the remainder of the paper.

\subsection{Clinical Guidelines}

A clinical guideline is a structured document, containing detailed
advice on the management of a particular disorder or group of
disorders, aimed at health-care professionals. As modern guidelines
are based on scientific evidence, they contain information about the
quality of the evidence on which particular statements are based;
e.g., statements at the highest recommendation level are usually
obtained from randomised clinical trials \cite{woolf2000}.

The design of a clinical guideline is far from easy. Firstly, the
gathering and classification of the scientific evidence underlying and
justifying the recommendations mentioned in a guideline are time
consuming, and require considerable expertise of the medical field
concerned.  Secondly, clinical guidelines are very detailed, and making
sure that all the information contained in the guideline is complete
for the guideline's purpose, and based on sound medical principles is
hard work.

\begin{figure}[t]
\hrulefill
\begin{itemize}
\item Step 1: diet
\item Step 2: if Quetelet Index (QI) $\leq$ 27, prescribe a sulfonylurea
      drug; otherwise, prescribe a biguanide drug
\item Step 3: combine a sulfonylurea drug and biguanide (replace
      one of these by a $\alpha$-glucosidase inhibitor if side-effects
      occur)
\item Step 4: one of the following:
      \begin{itemize}
      \item oral anti-diabetics and insulin
      \item only insulin
      \end{itemize}
\end{itemize}
\vspace{-\baselineskip}
\hrulefill\hrulefill
\caption{Tiny fragment of a clinical guideline on the
management of diabetes mellitus type 2. If one of the steps $s$
where $s=1,2,3$ is ineffective, the management moves to step $s+1$.}
\label{db2man}
\end{figure}

An example of a part of a guideline is the
following (translated) text:
\begin{quote}
1. refer to a dietist; check blood glucose after 3 months \\
\indent
2. in case (1) fails and Quetelet Index (QI) $\leq$ 27, then administer a
sulfonylureum derivate (e.g. tolbutamide, 500 mg 1 time per day, max.
1000 mg 2 per day) and in case of Quetelet Index (QI) $>$ 27 biguanide (500 mg 1 per
day, max. 1000 mg 3 times per day); start with lowest dosage, increase
each 2-4 weeks if necessary
\end{quote}
It is part of a real-world guideline for general practitioners about
the treatment of diabetes mellitus type 2. Part of this description
includes details about dosage of drugs at specific time periods. As we
want to reason about the general structure of the guideline, rather
than about dosages or specific time periods, we have made an
abstraction as shown in Fig.~\ref{db2man}.  This guideline fragment
is used in this paper as a running example.

Guidelines can be as large as 100 pages; however, the number of
recommendations they include are typically few.  In complicated
diseases, each type of disease is typically described in different
sections of a guideline, which provides ways to modularise the
formalisation in a natural fashion.  For example, in the Protocure
project, we have formalised an extensive guideline about breast cancer
treatment, which includes recommendations very similar in nature and
structure to the abstraction shown in Fig.~\ref{db2man}. In this
sense, the fragment in Fig.~\ref{db2man} can be lookup upon as one
of the recommendations in any guideline whatever its size. Clinical
\emph{protocols} are normally more detailed, and the abstraction used
here will not be appropriate if one wishes to consider such details in
the verification process.  For example, in the Protocure project we
also carried out work on the verification of a clinical protocol about
the management of neonatal jaundice, where we focussed on the levels
of a substance in the blood (bilirubin) \cite{Teije:2006AIM}.
Clearly, in this case abstracting away from substance levels would be
inappropriate. 

The conclusions that can be reached by the rest of the paper are
relative to the abstraction that was chosen. The logical methods
that we employ are related to this level of abstraction, even
though other logical methods are
available to deal issues such as more detailed temporal
reasoning \cite{Moszkowski:1985Computer} or probabilities
\cite{Richardson:2006ML,Kersting:2000ILP}, which might be necessary for
some guidelines or protocols.  Nonetheless, where development of an
abstraction of a medical document will be necessary for any
verification task, the way it is done is dependent on what is being
verified and the nature of the document. 
The level of abstraction that we employ allow us to reason about the
structure and effects of treatments, which, in our view, is the most
important aspect of many guidelines. 

One way to use formal methods in the context of guidelines is to
automatically verify whether or not a clinical guideline fulfils particular
properties, such as whether it complies with quality \emph{indicators}
as proposed by health-care professionals \cite{marcos2002}. For
example, using particular patient assumptions such as that after
treatment the levels of a substance are dangerously high or low, it is
possible to check whether this situation does or does not violate the
guideline.  However, verifying the effects of treatment as well as
examining whether a developed clinical guideline complies with global
criteria, such as that it avoids the prescription of redundant drugs,
or the request of tests that are superfluous, is difficult to
impossible if only the guideline text is available.  Thus, the
capability to check whether a guideline fulfils particular medical
objectives may require the availability of more medical knowledge than
is actually specified in a clinical guideline.  How much additional
knowledge is required may vary from guideline to guideline.  In the
development of the theory below it is assumed that at least some
medical background knowledge is required; the extent and the purpose
of that background knowledge is subsequently established using the
diabetes mellitus type 2 guideline.  The development, logical
implementation, and evaluation of
a formal method that supports this process is the topic of the
remainder of the paper.

\subsection{Using Temporal Logic in Clinical Guidelines}
\label{temp.log}

\begin{table}
\caption{Used temporal operators; $t$ stands for a time instance.}
\label{temp}
\centering\begin{tabular}{lll}
\hline\hline
{\bf Notation} & {\bf Informal meaning} & {\bf Formal meaning} \\
\hline
$\textsf{H} \varphi$ & $\varphi$ has always been true in the past & $t
\vDash \textsf{H} \varphi$ iff $\forall t' < t: t' \vDash \varphi$ \\
$\varphi\,\textsf{U}\,\psi$ & $\varphi$ is true until $\psi$ holds &
$\vDash \varphi\,\textsf{U}\,\psi$ iff $\exists t' \geq t: t' \vDash \psi$ \\ 
& & $\textup{and } \forall t'': t \leq t'' < t' \rightarrow t'' \vDash \varphi$ \\
\hline\hline
\end{tabular}
\vspace{-\baselineskip}
\end{table}

Many representation languages for formalising and reasoning about
medical knowledge have been proposed, e.g., predicate logic
\cite{lucas93}, (heuristic) rule-based systems
\cite{Shortliffe:1974}, and causal representations \cite{Patil:1981}.
It is not uncommon to abstract from time in these representations;
however, as medical management is very much a time-oriented process,
guidelines should be looked upon in a temporal setting.  It has been
shown previously that the step-wise, possibly iterative, execution of
a guideline, such as the example in Fig.~\ref{db2man}, can be
described precisely by means of temporal logic \cite{Teije:2006AIM}.
In a more practical setting it is useful to support the modelling
process by means of tools.
There is promising research for maintaining a logical
knowledge base in the context of the semantic web (e.g., the
Prot\'eg\'e-OWL editor\footnote{\url{http://protege.stanford.edu/overview/protege-owl.html}
[Accessed: 21 May 2008]}),  
and the logical formalisation
described in this paper could profit from the availability of such
tools.

The temporal logic that we use here is a modal logic, where
relationships between worlds in the usual possible-world semantics of
modal logic is understood as time order, i.e., formulae are
interpreted in a \emph{temporal frame} ${\cal F} = ({\mathbb T},<,I)$,
where ${\mathbb T}$ is the set of intervals or time points, $<$ a time
ordering, and $I$ an interpretation of the language elements with
respect to ${\mathbb T}$ and $<$. The language of
first-order logic, with equality and unique names assumption, is
augmented with the operators $\textsf{U}$, $\textsf{H}$,
\textsf{G}, $\textsf{P}$, and $\textsf{F}$, where the
temporal semantics of the first two operators is defined in 
Table~\ref{temp}.  The last four operators are simply defined in terms of
the first two operators:
\[
\begin{array}{lr}
\vDash \textsf{P} \varphi \leftrightarrow \neg \textsf{H} \neg \varphi & \mbox{(somewhere in the past)} \\
\vDash \textsf{F} \varphi \leftrightarrow \top \textsf{U} \varphi
& \mbox{(now or somewhere in the future)} \\
\vDash \textsf{G} \varphi \leftrightarrow \neg \textsf{F} \neg\varphi &
\mbox{(now and always in the future)} 
\end{array}
\]
This logic offers the right abstraction level to cope with the nature
of the temporal knowledge in clinical guidelines required for our purposes.

Other modal operators added to the language of first-order logic
include $\textsf{X}$, where $\textsf{X} \varphi$ has the operational
meaning of an execution step, followed by execution of program part
$\varphi$. Even though this operator is not explicitly used in our 
formalisation of medical knowledge, a principle similar to the
semantics of this operator
is used in Section \ref{section:verifplans} for reasoning about the
step-wise execution of the guideline.

In addition, axioms can be added that indicate
that progression in time is \emph{linear} (there are other
possible axiomatisations, such as branching time, see \cite{turner85}).
The most important of these are:
\begin{itemize}
\item [(1)] \emph{Transitivity}: $\vDash \textsf{F}\textsf{F} \varphi \rightarrow \textsf{F} \varphi$
\item [(2)] \emph{Backward linearity}: $\vDash (\textsf{P} \varphi \wedge \textsf{P} \psi) \rightarrow (\textsf{P}(\varphi \wedge \psi) \vee \textsf{P}(\varphi \wedge \textsf{P}\psi) \vee \textsf{P}(\textsf{P}\varphi \wedge \psi))$
\item [(3)] \emph{Forward linearity}: $\vDash (\textsf{F} \varphi \wedge \textsf{F}
  \psi) \rightarrow (\textsf{F}(\varphi \wedge \psi) \vee \textsf{F}(\varphi \wedge
  \textsf{F}\psi) \vee \textsf{F}(\textsf{F}\varphi \wedge \psi))$
\end{itemize}
Transitivity ensures that we can move along the time axis from the
past into the future; backward and forward linearity ensure that the
time axis does not branch. Consider, for example, axiom (3), which
says that if there exists a time $t$ when $\varphi$ is true, and a
time $t'$ when $\psi$ holds, then there are three possibilities:
$\varphi$ and $\psi$ hold at the same time, or at some time in the
future $\varphi$ and further away in the future $\psi$ hold; the
meaning of the last disjunct is similar.  Other useful axioms concern
the boundedness of time; assuming that time has no beginning and no
end, gives rise to the following axioms: $\vDash \textsf{H} \varphi \rightarrow
\textsf{P} \varphi$ and $\vDash \textsf{G} \varphi \rightarrow \textsf{F} \varphi$.

Alternative formal languages for modelling medical knowledge are
possible. For example, differential equations describing compartmental
models that are used to predict changes in physiological variables in
individual patients have been shown to be useful (e.g.,
\cite{Magni:2000ABE,Lehmann:1998CMPB}). In the context of
clinical reasoning they are less useful, as they essentially concern
levels of substances as a function of time and, thus, do not offer the
right level of abstraction that we are after.

\section{Application to Medical Knowledge}
\label{abductive}

It is well-known that knowledge elicitation is difficult (see e.g.,
\cite{Evans:1988BIT}) and due to complexity and uncertainty this is
particularly true for medical knowledge (see e.g., \cite{HandbookMI}).
The effort to acquire this knowledge is dependent on the availability
of the knowledge in the guideline and the complexity of the mechanisms
that are involved in the development of the disease. For
evidence-based guidelines, a large part of the relevant knowledge
required for checking the quality of the recommendations is included
in the guideline, which makes the problem more contained than the
problem of arbitrary medical knowledge elicitation.

The purpose of a clinical guideline is to have a certain positive
effect on the health status of a patient to which the guideline
applies. To establish that this is indeed the case, knowledge
concerning the normal physiology and abnormal, disease-related
pathophysiology of a patient is required.  Some of this physiological
knowledge may be missing from the clinical guidelines; however, much
of this knowledge can be acquired from textbooks on medical
physiology, which reduces the amount of effort required to construct
such knowledge models. The latter approach was taken in this research.

It is assumed that two types of knowledge are involved in
detecting the violation of good medical practice:
\begin{itemize}
\item Knowledge concerning the (patho)physiological mechanisms
  underlying the disease, and the way treatment influences these
  mechanisms. The knowledge involved could be causal in nature, and is
  an example of \emph{object-knowledge}.
\item Knowledge concerning good practice in treatment selection; this
  is \emph{meta-knowledge}.
\end{itemize}
Below we present some ideas on how such knowledge may be formalised
using temporal logic (cf.\ \cite{lucas95} for earlier work in the area
of formal modelling of medical knowledge).

We are interested in the prescription of drugs, taking into account
their mode of action. Abstracting from the dynamics of their
pharmacokinetics, this can be formalised in logic as follows:
\begin{equation}
(\textsf{G}\,d \wedge r) \rightarrow \textsf{G}(m_1 \wedge \cdots \wedge m_n)
\end{equation}
\noindent
where $d$ is the name of a drug, $r$ is a (possibly negative or empty)
\emph{requirement} for the drug to take effect, and $m_k$ is a mode of
action, such as decrease of release of glucose from the liver, which
holds at all future times. Note that we assume that drugs are applied
for a long period of time, here formalised as `always'. This is
reasonable if we think of the models as finite structures that
describe a somewhat longer period of time, allowing the drugs to take
effect. Synergistic effects and interactions amongst drugs can also be
formalised along those lines, as required by the guideline under
consideration. This can be done either by combining their
joint mode of action, by replacing $d$ in the formula above by a
conjunction of drugs, by defining harmful joint effects of drugs in
terms of inconsistency, or by reasoning about modes of actions.  As we
do not require this feature for the clinical guideline considered in
this paper, we will not go into details. In addition, it is possible
to reason about such effects using special purpose temporal logics
with abstraction and constraints, such as developed by Allen
\cite{Allen:ACM1983} and Terenziani \cite{TerenzianiCI2000} without a
connection to a specific field, and by Shahar \cite{shahar1997Temp}
for the field of medicine.  Thus, temporal logics are expressive
enough to cope with extensions to the formalisation as used in this
paper.

The modes of action $m_k$ can be combined, together with an
\emph{intention} $n$ (achieving normoglycaemia, i.e., normal blood
glucose levels, for example), a particular patient \emph{condition}
$c$, and \emph{requirements} $r_j$ for the modes of action to be
effective:
\begin{equation}
(\textsf{G} m_{i_1} \wedge \cdots \wedge \textsf{G} m_{i_m}\wedge r_1 \wedge \cdots \wedge r_p \wedge  \textsf{H} c ) \rightarrow \textsf{G} n
\end{equation}
For example, if the mode describes that there is a stimulus to
secrete more insulin and the requirement that sufficient capacity to provide
this insulin is fulfilled, then the amount of glucose in the blood
will decrease.

Good practice medicine can then be formalised as follows.  Let ${\cal
  B}$ be background knowledge, $T \subseteq \{d_1,\ldots,d_p\}$ be a
set of drugs, $C$ a collection of patient conditions, $R$ a collection
of requirements and $N$ a collection of intentions which the
physician has to achieve. As an abbreviation, the union of $C$ and $R$, i.e., the
variables describing the patient, will be referred
to as $P$, i.e., $P = C \cup R$.
Finding an acceptable treatment given
such knowledge amounts to finding an explanation, in terms of a
treatment, that the intention will be achieved. Finding the best
possible explanation given a number of findings is called
\emph{abductive} reasoning \cite{console91,poole90}. We say that a 
set of drugs $T$ is a \emph{treatment} according to the theory of 
abductive reasoning if \cite{lucas2003}:
\begin{description}
\item [(M1)] ${\cal B} \cup \textsf{G} T \cup P \nvDash \bot$ (the drugs do not have contradictory effects), and
\item [(M2)] ${\cal B} \cup \textsf{G} T \cup P \vDash N$ (the drugs
  handle all the patient problems intended to be managed).
\end{description}
One could think of the formula ${\cal B} \cup \textsf{G} T \cup P$
as simulating a particular patient $P$ given a particular treatment
$T$. For each relevant patient groups, these
properties can be investigated.
If in addition to \textbf{(M1)} and \textbf{(M2)} condition
\begin{description}
\item [(M3)] $O_\varphi(T)$ holds, where $O_\varphi$ is a meta-predicate
  standing for an optimality criterion or combination of optimality
  criteria $\varphi$, then the treatment is said to be \emph{in
    accordance with good-practice medicine}.
\end{description}
A typical example of this is subset minimality $O_\subset$:
\begin{equation}
O_\subset(T) \equiv \forall T' \subset T: T' \; \mbox{is not a
treatment according to \textbf{(M1)} and \textbf{(M2)}}
\label{eq:subsetmin}
\end{equation}
\noindent
i.e., the minimum number of effective drugs are being prescribed. For
example, if $\{d_1,d_2,d_3\}$ is a treatment that satisfies condition
\textbf{(M3)} in addition to \textbf{(M1)} and \textbf{(M2)}, then the subsets $\{d_1,d_2\}$,
$\{d_2,d_3\}$, $\{d_1\}$, and so on, do not satisfy conditions 
\textbf{(M1)} and \textbf{(M2)}. 
In the context of abductive reasoning, subset minimality is often
used in order to distinguish between various solutions; it is also
referred to in literature as \emph{Occam's razor}. 
Another definition
of the meta-predicate $O_\varphi$ is in terms of minimal cost $O_c$:
\begin{equation}
O_c(T) \equiv \forall T', \mbox{with}\; T' \;\mbox{a treatment:}\;\; c(T') \geq c(T)
\end{equation}
\noindent
where $c(T) = \sum_{d \in T} \textit{cost}(d)$; combining the two
definitions also makes sense. For example, one could come up with a
definition of $O_{\subset,c}$ that among two subset-minimal treatments selects the
one that is the cheapest in financial or ethical sense.

The quality criteria that we have presented in this section could also
be taken as starting points for \emph{critiquing}, i.e., criticising
clinical actions performed and recorded by a physician
(cf.~\cite{Miller:1984} for an early critiquing system), especially if
we consider the formalisation of the background knowledge a model for
simulating a patient receiving a specific treatment. However, here we 
look for means to criticise the recommendations given by the
guidelines.

In order to verify the quality of guidelines, we do not make use of
data from medical records.  The use of such data is especially
important if one wishes to empirically evaluate the guideline. As data
may be missing from the database---a very common situation in clinical
datasets---tests ordered for a patient and treatments given to the
patient may not be according to the guideline. Therefore, such
datasets cannot be used to identify problems with the clinical
guideline. Results would tell as much about the dataset as about the
guidelines. Once the guideline has been shown to be without flaws, it
becomes interesting to carry out subsequent evaluation of the
guideline using patient data.  These were the main reasons why we
explored guideline quality by using well-understood and well-described
data from hypothetical patients; this simulates the way medical
doctors would normally critically look at a guideline. Notice the
similarity with use-cases in software engineering. This method is
practical and possible, and could be used in the process of designing
a guideline.

\section{Management of Diabetes Mellitus Type 2}
\label{guideline}

To determine the global quality of the guideline, the background
knowledge itself was only formalised so far as required for
investigating the usefulness of the theory of quality checking
introduced above. The knowledge that is presented here was acquired
with the help of a physician, though this knowledge can be found in
many standard textbooks on physiology (e.g.,
\cite{Ganong:2005,Guyton:2000}).

\subsection{Initial Analysis}

It is well known that diabetes type 2 is a very complicated disease:
various metabolic control mechanisms are deranged and many different
organ systems, such as the cardiovascular and renal system, may be
affected by the disorder.  Here we focus on the derangement of glucose
metabolism in diabetic patients, and even that is nontrivial.  To
support non-expert medical doctors in the management of this
complicated disease in patients, access to a guideline is really
essential.

One would expect that as this disorder is so complicated, the diabetes
mellitus type 2 guideline is also complicated. This, however, is not
the case, as may already be apparent from the guideline fragment shown
in Fig.~\ref{db2man}. This indicates that much of the
knowledge concerning diabetes mellitus type 2 is missing from the
guideline, and that without this background knowledge it will be
impossible to spot the sort of flaws we are after. Hence, the
conclusion is that a deeper analysis is required; the results of such
an analysis are discussed next.

\subsection{Diabetes Type 2 Background Knowledge}
\label{bgknowledge}
\begin{figure}
\centerline{\includegraphics[width=\textwidth]{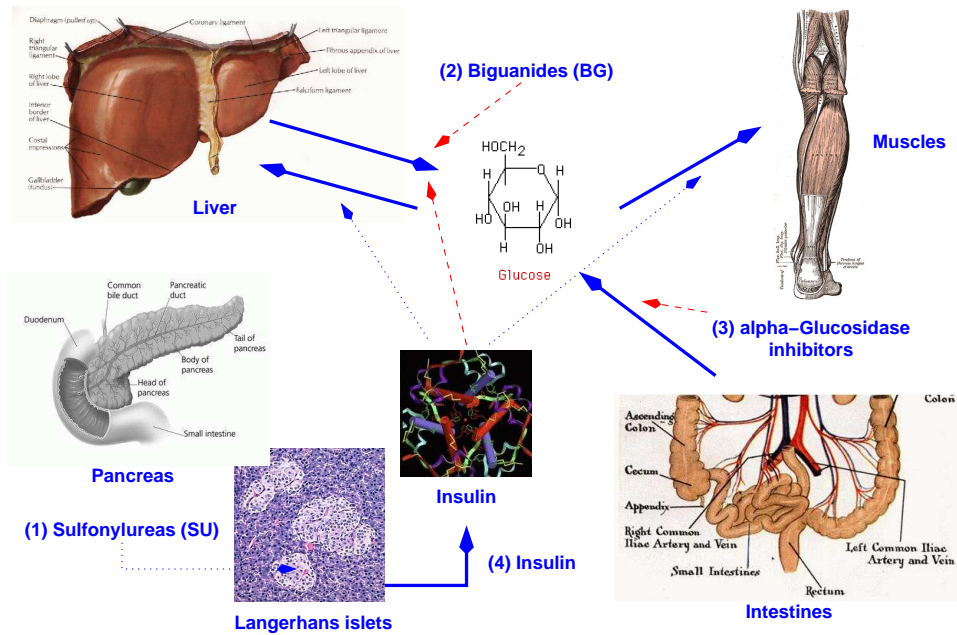}}
\caption{Summary of drugs and mechanisms controlling
  the blood level of glucose; $-$ $-$ $\rightarrow$: inhibition,
$\cdots\cdots$$\rightarrow$: stimulation.}
\label{diabetes}
\end{figure}

Fig.~\ref{diabetes} summarises the most important mechanisms and
drugs involved in the control of the blood level of glucose. The protein
hormone insulin, which is produced by the \emph{B~cells} in the
Langerhans islets of the \emph{pancreas}, has the following major
effects:
\begin{itemize}
\item it increases the uptake of glucose by the liver, where it is
  stored as glycogen, and inhibits the release of glucose from the
  liver;
\item it increases the uptake of glucose by insulin-dependent tissues,
  such as muscle and adipose tissue.
\end{itemize}
At some stage in the natural history of diabetes mellitus type 2, the
level of glucose in the blood is too high (hyperglycaemia) due to
decreased production of insulin by the B~cells. A popular hypothesis
explaining this phenomenon is that target cells have become insulin
resistant, which with a delay causes the production of insulin by the
B~cells to raise.  After some time, the B~cells become exhausted, and
they are no longer capable of meeting the demands for insulin. As a
consequence, hyperglycaemia develops.

Treatment of diabetes type 2 consists of:
\begin{itemize}
\item Use of \emph{sulfonylurea} (SU) drugs, such as tolbutamid. These
  drugs stimulate the B~cells in producing more insulin, and if the
  cells are not completely exhausted, the hyperglycaemia can thus be
  reverted to normoglycaemia (normal blood glucose levels).
\item Use of \emph{biguanides} (BG), such as metformin. These
  drugs inhibit the release of glucose from the liver.
\item Use of \emph{$\alpha$-glucosidase inhibitors}. These
  drugs inhibit (or delay) the absorption of glucose from the intestines.
\item Injection of \emph{insulin}. This is the ultimate, causal treatment.
\end{itemize}
As insulin is typically administered by injection, in contrast to the
other drugs which are normally taken orally, doctors prefer to delay
prescribing insulin as long as possible.  Thus, the treatment part of
the diabetes type 2 guideline mentions that one should start with
prescribing oral antidiabetics (SU or BG, cf.\ Fig.~\ref{db2man}).
Two of these can also be combined if taking only one has insufficient
glucose-level lowering effect.  If treatment is still unsatisfactory,
the guideline suggests to: (1) either add insulin, or (2) stop with
the oral antidiabetics entirely and to start with insulin.

From a medical point of view, advice (1) above is somewhat curious.
If the oral antidiabetics are no longer effective enough, the B~cells
could be completely exhausted. Under these circumstances, it does not
make a lot of sense to prescribe an SU drug. The guideline here
assumes that the B~cells are always somewhat active, which may limit
the amount of insulin that has to be prescribed. Similarly,
prescription of a BG (or a $\alpha$-glucosidase inhibitor) is
justified, as by adding such an oral antidiabetic to insulin, the
number of necessary injections can be reduced from twice a day to once
a day. It should be noted that, when on insulin treatment, patients
run the risk of getting hypoglycaemia, which is a side
effect of insulin treatment not mentioned explicitly in the guideline.

The background knowledge concerning the (patho-)physiology of the
glucose metabolism as described above is formalised using temporal
logic, and kept as simple as possible. The specification is denoted by
${\cal B}_{\mbox{\scriptsize DM2}}$:
\begin{quote}
\begin{tabbing}
\textup{(1) }\=$\textsf{G}\,\textrm{Drug}(\textit{insulin}) \rightarrow \textsf{G}($\=$\textit{uptake}(\textit{liver},\textit{glucose}) = \textit{up} \; \wedge$ \\
 \>\> $\textit{uptake}(\textit{peripheral-tissues},\textit{glucose}) =
\textit{up})$\\[1ex]
\textup{(2)} $\textsf{G}(\textit{uptake}(\textit{liver}, \textit{glucose}) = \textit{up} \rightarrow
\textit{release}(\textit{liver}, \textit{glucose}) = \textit{down})$ \\[1ex]
\textup{(3)} $(\textsf{G}\,\textrm{Drug}(\textrm{SU}) \wedge \neg
\textit{capacity}(\textit{b-cells},
\textit{insulin}) = \textit{exhausted})$ \\
\> $\rightarrow \textsf{G}\textit{secretion}(\textit{b-cells}, \textit{insulin}) = \textit{up}$ \\[1ex]
\textup{(4)} $\textsf{G}\,\textrm{Drug}(\textrm{BG}) \rightarrow \textsf{G}\textit{release}(\textit{liver},
\textit{glucose}) = \textit{down}$
\\[2ex]
\textup{(5) }\=$($\=$\textsf{G}\textit{secretion}(\textit{b-cells}, \textit{insulin}) = \textit{up} \; \wedge$ 
 $\textit{capacity}(\textit{b-cells},\textit{insulin}) = \textit{subnormal} \; \wedge$ \\
 \>\> $\textrm{QI} \leq 27 \wedge \textsf{H}\, \textrm{Condition}(\textit{hyperglycaemia}))$ 
$\rightarrow \textsf{G}\, \textrm{Condition}(\textit{normoglycaemia})$ \\[2ex]
\textup{(6)} $(\textsf{G}\textit{release}(\textit{liver}, \textit{glucose}) = \textit{down} \; \wedge$ 
 $\textit{capacity}(\textit{b-cells}, \textit{insulin}) = \textit{subnormal} \; \wedge$ \\
\>\>$\textrm{QI} > 27 \wedge \textsf{H}\, \textrm{Condition}(\textit{hyperglycaemia}))$ 
$\rightarrow \textsf{G}\, \textrm{Condition}(\textit{normoglycaemia})$ \\[1ex]
\textup{(7) }\=$($\=$($\=$\textsf{G} \textit{release}(\textit{liver}, \textit{glucose}) = \textit{down} \; \vee$ 
  $\textsf{G} \textit{uptake}(\textit{peripheral-tissues}, \textit{glucose}) = \textit{up}) \; \wedge$ \\
 \>\>$\textit{capacity}(\textit{b-cells}, \textit{insulin}) = \textit{nearly-exhausted} \; \wedge$ 
 $\textsf{G}\textit{secretion}(\textit{b-cells}, \textit{insulin}) = \textit{up} \; \wedge$ \\
 \>\>$\textsf{H}\, \textrm{Condition}(\textit{hyperglycaemia}))$ 
$\rightarrow \textsf{G}\, \textrm{Condition}(\textit{normoglycaemia})$ \\[1ex]
\textup{(8)} $(\textsf{G}\textit{uptake}(\textit{liver}, \textit{glucose}) = \textit{up} \; \wedge$ 
 $\textsf{G}\textit{uptake}(\textit{peripheral-tissues}, \textit{glucose}) = \textit{up}) \; \wedge$ \\
 \>\>$\textit{capacity}(\textit{b-cells}, \textit{insulin}) = \textit{exhausted} \; \wedge$ 
 $\textsf{H}\, \textrm{Condition}(\textit{hyperglycaemia}))$ \\
\>$\rightarrow \textsf{G}(\textrm{Condition}(\textit{normoglycaemia}) \vee \textrm{Condition}(\textit{hypoglycaemia}))$ \\[1ex]
\textup{(9) }$(\textrm{Condition}(\textit{normoglycaemia})$ $\oplus$ $\textrm{Condition}(\textit{hypoglycaemia})$ $\oplus$ \\
\>\>$\textrm{Condition}(\textit{hyperglycaemia}))$ \= $\land$ 
$\neg (\textrm{Condition}(\textit{normoglycaemia})$ \= $\land$\\
\>\>$\textrm{Condition}(\textit{hypoglycaemia})$
$\land$ $\textrm{Condition}(\textit{hyperglycaemia}))$
\end{tabbing}
\end{quote}
where $\oplus$ stands for the exclusive OR. Note that when the B~cells
are exhausted, increased uptake of glucose by the tissues may result 
not only in normoglycaemia but also in hypoglycaemia. Note that this
background knowledge was originally developed for reasoning about the
application of a single treatment. It requires some modification in
order to reason about the whole guideline fragment (see Section
\ref{section:verifplans}).

\subsection{Quality Check}
\label{section:qualitycheck}

The consequences of various treatment options can be examined using the
method introduced in Section \ref{abductive}. Hypothetical patients
for whom it is the intention to reach a normal level of glucose in the
blood (normoglycaemia) and one of the steps
in the guideline is applicable in the guideline fragment given
in Fig.~\ref{db2man}, are considered, for example:
\begin{itemize}
\item Consider a patient with hyperglycaemia due to nearly exhausted
B~cells. For these patients, the third step of Fig.~\ref{db2man}
is applicable, so we check that:
\end{itemize}
\[
\begin{array}{l}
{\cal B}_{\mbox{\scriptsize DM2}} \cup \textsf{G}\,T \cup
\{\textit{capacity}(\textit{b-cells},\textit{insulin}) =
\textit{nearly-exhausted}\} \; \cup \\[1ex]
\;\;\;\;\;\{\textsf{H}\,
\textrm{Condition}(\textit{hyperglycaemia})\} \vDash \textsf{G}\, \textrm{Condition}(\textit{normoglycaemia})
\end{array}
\]
holds for $T = \{\textrm{Drug}(\textrm{SU}),\textrm{Drug}(\textrm{BG})\}$,
which also satisfies the minimality condition $O_\subset(T)$.
\begin{itemize}
\item Prescription of treatment $T =
  \{\textrm{Drug}(\textrm{SU}),\textrm{Drug}(\textrm{BG}),\textrm{Drug}(\textrm{insulin})\}$
  for a patient with exhausted B~cells, for which the intended
  treatment regime is described in the fourth step of Fig.~\ref{db2man}, yields:
\end{itemize}
\[
\begin{array}{l}
{\cal B}_{\mbox{\scriptsize DM2}} \cup \textsf{G}\, T \cup
  \{\textit{capacity}(\textit{b-cells},\textit{insulin}) =
  \textit{exhausted}\} \; \cup \\[1ex]
  \;\;\;\;\{\textsf{H}\,\textrm{Condition}(\textit{hyperglycaemia})\}
  \vDash \\[1ex]
\;\;\;\;\;\;\;\;\textsf{G}(\textrm{Condition}(\textit{normoglycaemia}) \vee\textrm{Condition}(\textit{hypoglycaemia}))
\end{array}
\]
In the last case, it appears that it is possible that a patient
develops hypoglycaemia due to treatment; if this possibility is
excluded from axiom (8) in the background knowledge, then the 
minimality condition $O_\subset(T)$, and also
$O_{c}(T)$, does not hold since insulin by itself is enough to
reach normoglycaemia. In either case, good practice medicine is
violated, which is to prescribe as few drugs as possible, taking into
account costs and side-effects of drugs.  Here, three drugs are
prescribed whereas only two should have been prescribed (BG and
insulin, assuming that insulin alone is too costly), and the possible
occurrence of hypoglycaemia should have been prevented.

\section{Automated Quality Checking}
\label{ar}

\begin{figure}
\centerline{\includegraphics[width=250pt]{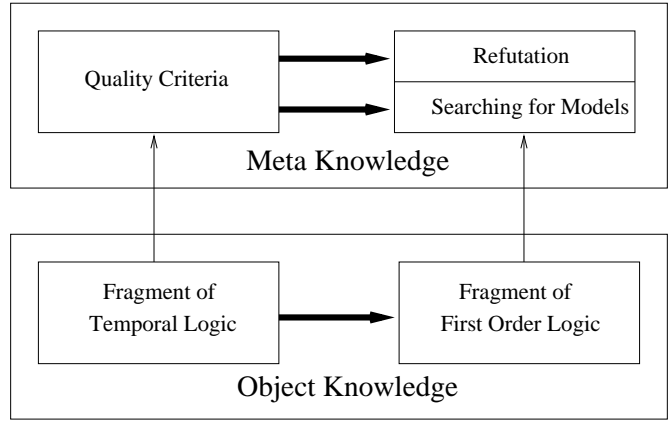}}
\caption{Translation of medical knowledge.}
\label{translation.outline}
\end{figure}

As mentioned in the introduction, we have explored the feasibility of using
automated reasoning tools to
check the quality of guidelines, in the sense described above. 

\subsection{Motivation for using Automated Reasoning}

Several techniques are available for reasoning with temporal logic.
Firstly, an automated theorem prover aims at proving theorems without
any interaction from the user. This is a problem with high complexity;
e.g., for first-order logic, this problem is recursively enumerable.
For this reason, \emph{interactive theorem proving} has been used as
an alternative, where it is possible and sometimes necessary to give
hints to the system. As a consequence, more complicated problems can
be handled; however, in the worst case every step of the proof has to
be performed manually.

For our work, it is of interest to obtain insight how much of the
proof effort can be automated as this would clearly improve the
practical usefulness of employing formal methods in the process of
guideline development. In our previous work we have considered using
interactive theorem proving \cite{HommersomIEEE2007}.  This was a
successful experiment; however, the number of interactions that were
needed were still high and a lot of expertise in the area of theorem
proving is required for carrying out this task. Furthermore, there has
been considerable progress in terms of speed and the size of problems
that theorem provers can handle \cite{casc}.  In our opinion, these
developments provide enough justification to explore the use of
automated reasoning techniques in combination with specific
strategies.

One of the most important application areas of model finders and
theorem provers is program verification. In programs
there is a clear beginning of the execution, which makes it intuitive
to think about properties that occur after the start of the program.
Therefore, it is not surprising that much work that has been done in the context
of model finding and theorem proving only deals with the future time
modality. However, it is more natural to model medical knowledge with
past time operators, i.e., what happened to the patient in the past.
It is well-known that formulas with a past-time modality can be mapped
to a logical formula with only future time modalities such that both
formulas are equivalent for some initial state \cite{gab87}. 
The main drawback of this approach is that formulas will get
much larger in size \cite{markey2003} and as a consequence become much
harder to verify in a theorem prover designed for modal logics. 

For this reason, we have chosen to use an alternative approach which
uses a \emph{relational translation} to map the temporal logic
formulas to first-order logic. As primary tools we used the
resolution-based theorem prover \otter{} \cite{mccune03} and the
finite model searcher \mace{} \cite{mccune01}, which take first-order
logic with equality as their input.  These systems have been optimised
for reasoning with first-order logical formulas and offer various
reasoning strategies to do this efficiently. For example, \otter{}
offers the set-of-support strategy and hyperresolution as efficient
reasoning methods. There are alternative systems that could have
been used;
however, it is not the aim of this paper to compare these systems.
\otter{} has been proven to be robust and efficient, and has
been successfully applied to solve problems of high complexity,
for example in the area of algebra \cite{Philips:2005JSC} and
logic \cite{Jech:1995JAR}.

There has been work done to improve the speed of resolution-based
theorem provers on modal formulas \cite{areces2000}, but again,
converse modalities such as the past-time operators are not
considered.  We found that the general heuristics
applicable to full first-order logic are sufficient to our task.

\subsection{Translation}
\label{translation}

In order to prove meta-level properties, it is necessary to reason at
the object-level. Object-level properties typically do not contain
background knowledge concerning the validity what it being verified.
For example, the (M2) property of Section \ref{abductive} has a clear
meaning in terms of clinical guidelines, which would be lost if stated
as an object-level property. Moreover, it is not (directly) possible
to state that something does \emph{not} follow at the object level.
Fig.~\ref{translation.outline} summarises the general approach. We
will first give a definition for translating the object knowledge to
standard logic and then the translation of the meta-level knowledge
will follow.

\subsubsection{Translation of Object Knowledge}
\label{trans.object}

The background knowledge, as defined in Subsection \ref{bgknowledge},
is translated into first order logic.  For every function
$f$ with two elements in the co-domain, call these $\{ c_1, c_2\}$, we
introduce a fresh variable $p$ for every element $a$ in the domain
such that $f(a) = c_1$ holds iff $p$ holds, and $f(a) = c_2$ holds iff
$\neg p$ holds.  
For example, axiom (2) of $\mathcal{B}_{DM2}$ in Section
\ref{bgknowledge} is represented by defining 
`$\textit{uptake}(\textit{liver}, \textit{glucose}) = \textit{up}$' and 
`$\textit{release}(\textit{liver}, \textit{glucose}) = \textit{up})$' as
propositions and rewriting this axiom as:
\[
\textsf{G}(`\textit{uptake}(\textit{liver}, \textit{glucose}) =
\textit{up}\mbox{'} \rightarrow
\neg (`\textit{release}(\textit{liver}, \textit{glucose}) =
\textit{up}\mbox{'}))
\]
For
the \textit{capacity} function, a function with three elements in its
co-domain, we add a proposition $p_x$ for each expression
$\textit{capacity}(\textit{b-cells}, \textit{insulin}) = x$ and an
axiom saying that each pair of these propositions are mutually
exclusive. Finally, the term $\textrm{QI} > 27$ is interpreted as a
proposition as well, i.e., we do not reason about the numerical
value of QI.

Technically, this translation is not required, since we could extend
the translation below to full first-order temporal logic. In practice
however, we would like to avoid additional complexity from first-order
formulas during the automated reasoning.

The relational translation (e.g.,
\cite{moore80,areces2000,schmidt2003}) $\ST_{t}(\varphi)$, also
referred to as the \emph{standard translation}, translates a propositional
temporal logical formula $\varphi$ into a formula in a first-order
logic with (time-indexed) unary predicate symbols $P$ for every
propositional variable $p$ and one binary predicate $>$.  It is
defined as follows, where $t$ is an individual variable standing for
time:
\[
\begin{array}{lll}
\ST_{t}(p) & \textup{iff} & P(t) \\
\ST_{t}(\neg \varphi) & \textup{iff} & \neg \ST_{t}(\varphi) \\
\ST_{t}(\varphi \land \psi) & \textup{iff} & \ST_{t}(\varphi) \land \ST_{t}(\psi) \\
\ST_{t}(\textsf{G} \varphi) & \textup{iff} & \forall t'\; (t \not>  t' \rightarrow \ST_{t'}(\varphi)) \\
\ST_{t}(\textsf{H} \varphi) & \textup{iff} & \forall t'\; (t > t' \rightarrow \ST_{t'}(\varphi)) \\
\end{array}
\]
Note that the
last two elements of the definition give the meaning of the
$\textsf{G}$ modality and its converse, the $\textsf{H}$ modality. For
example, the formula $\textsf{G} (p \rightarrow \textsf{P} p)$
translates to $\forall t_2\; (t \not>  t_2 \rightarrow (P(t_2)
\rightarrow \exists t_3\; (t_2 > t_3 \land P(t_3)))$. It is
straightforward to
show that a formula in temporal logic is satisfiable if and only if
its relational translation is. Also, recall that we use set union to denote conjunction, thus
$\ST_{t} (\Gamma \cup \Delta)$ is
defined as $\ST_{t} (\Gamma) \land \ST_{t} (\Delta)$.  

In the literature a functional approach to translating modal logic has
appeared as well \cite{ohlbach1988}, which relies on a non-standard
interpretation of modal logic and could be taken as an alternative to
this translation.

\subsubsection{Translation of Meta-level Knowledge}
\label{trans.meta}

Again, we consider the criteria for good practice medicine and make
them suitable for use with the automated reasoning tools. In order to stress
that we deal with provability in these tools, we use the `$\vdash$' 
symbol instead of the `$\models$' (validity) symbol. We say that a
treatment $T$ is a treatment complying with the requirements of good
practice medicine iff:
\begin{description}
\item [(M1$'$)] $\ST_{t}(\mathcal{B}\cup\textsf{G} T \cup C \cup R)
\nvdash \bot$
\item [(M2$'$)] $\ST_{t}(\mathcal{B}\cup\textsf{G} T \cup C \cup R \cup
\neg N) \vdash \bot$
\item [(M3$'$)] $\forall T' \subset T: T' \; \mbox{is not a treatment
according to \textbf{(M1$'$)} and \textbf{(M2$'$)}}$
\noindent
\end{description}
Criterion \textbf{(M3$'$)} is a specific instance of \textbf{(M3)},
i.e., subset minimality as explained in Section \ref{abductive}
(Equation (\ref{eq:subsetmin})). As the relational translation preserves
satisfiability, these quality requirements are equivalent to their
unprimed counterparts in Section \ref{abductive}. To automate this reasoning process we use
\mace{} to verify \textbf{(M1$'$)}, \otter{} to verify
\textbf{(M2$'$)}, and \textbf{(M3$'$)} can be seen as a combination of
both for all subsets of the given treatment.

\subsection{Results}
\label{implementation}

In this subsection we will discuss the actual implementation in
\otter{} and some results obtained by using particular heuristics.

\subsubsection{Resolution Strategies}
\label{strategies}

An advantage that one gains from using a standard theorem prover 
that a whole range of different resolution rules and search
strategies are available and can be varied depending on the 
problem. \otter{} uses the set-of-support strategy
\cite{wos65} as a standard strategy. In this strategy the original
set of clauses is divided into a \emph{set-of-support} and a
\emph{usable} set such that in every resolution step at least one of
the parent clauses has to be member of the set-of-support and each
resulting resolvent is added to the set-of-support.

Looking at the structure of the formulas in Section \ref{guideline},
one can see that formulas are of the form $p_0\wedge \cdots \wedge p_n
\rightarrow q$, where $p_0 \wedge \cdots \wedge p_n$ and $q$ are
almost all
positive literals. Hence, we expect the occurrence of mainly negative literals in our
clauses, which can be exploited by using negative hyperresolution
(neg\_hyper for short) \cite{robinson65} in
\otter. With this strategy a clause with at least one positive literal
is resolved with one or more clauses only containing negative
literals (i.e., negative clauses), provided that the resolvent is a
negative clause. The parent clause with at least one positive literal
is called the \emph{nucleus}, and the other, negative, clauses are
referred to as the \emph{satellites}. 

\subsubsection{Verification of Treatments}
\label{proofs}

The ordering predicate $>$ that was introduced in Section
\ref{trans.object} was defined by adding axioms of irreflexivity, anti-symmetry, and
transitivity. We did not find any cases where the axiom of
transitivity was required to construct the proof, which can be
explained by the low modal depth of our formulas.  As a consequence,
the axiom was omitted with the aim to improve the speed of theorem
proving. Furthermore, because we lack the next step modality, we did not
need to axiomatise a subsequent time point. Experiments showed that
this greatly reduces the amount of effort for the theorem prover.

We used \otter{} to perform the two proofs which are instantiations of
\textbf{(M2$'$)}.  First we, again, consider a patient with
hyperglycaemia due to nearly exhausted B~cells and prove:
\[
\begin{array}{l}
\ST_{0}({\cal B}_{\mbox{\scriptsize DM2}} \cup \textsf{G}\,T
\cup \{\textit{capacity}(\textit{b-cells},\textit{insulin}) =
\textit{nearly-exhausted}\} \\[1ex]
\;\;\;\;\;\cup \; \{ \textsf{H}\,
\textrm{Condition}(\textit{hyperglycaemia})\} \\[1ex] 
\;\;\;\;\;\cup \; \{ \neg
\textsf{G}\,\textrm{Condition}(\textit{normoglycaemia}) \})
\vdash \bot
\end{array}
\]
where $T = \{\textrm{Drug}(\textrm{SU}),\textrm{Drug}(\textrm{BG})\}$,
i.e., step 3 of the guideline (see Fig.~\ref{db2man}).
Note that we use `$0$' here to represent the current time point.
This property was proven using \otter{} in 62 resolution steps with the
use of the negative hyperresolution strategy.
A summary of this proof can be found in Appendix \ref{firstproof}.

Similarly, given $T =
\{\textrm{Drug}(\textrm{SU}),\textrm{Drug}(\textrm{BG}),\textrm{Drug}(\textrm{insulin})\}$
to a patient with exhausted B~cells, as suggested by the
guideline in step 4, it follows that:
\[
\begin{array}{l}
{\ST_{0}(\cal B}_{\mbox{\scriptsize DM2}} \cup \textsf{G}\, T \cup
  \{\textit{capacity}(\textit{b-cells},\textit{insulin}) =
  \textit{exhausted}\} \; \cup \\[1ex]
\;\;\;\;\{\textsf{H}
  \textrm{Condition}(\textit{hyperglycaemia})\}\; \cup \\[1ex]
\;\;\;\;\{\neg
  (\textsf{G}(\textrm{Condition}(\textit{normoglycaemia})
\vee\textrm{Condition}(\textit{hypoglycaemia}))) \}) \vdash \bot
\end{array}
\]
However, if we take $T = \{ \textrm{Drug}(\textrm{insulin} \}$, the
same holds, which shows that, as already mentioned in
Section~\ref{section:qualitycheck}, that even if we ignore the fact
that the patient may develop \textit{hypoglycaemia}, the treatment is not minimal.
Compared to the previous property, a similar magnitude of
complexity in the proof was observed, i.e., 52 resolution steps.

\subsubsection{Using Weighting}
\label{weighing}

\begin{figure}
\begin{center}
\begin{tabular}{rrr}
\hline\hline
\textbf{Weights} & \textbf{Clauses (binary res)} & \textbf{Clauses (negative hyper res)} \\
\hline
$(0,1)$ & 17729 & 6994 \\
$(1,0)$ & 13255 & 6805 \\
$(1,1)$ & 39444 & 7001 \\
$(1,-1)$ & 13907 & 6836 \\
$(2,-2)$ & 40548 & 7001 \\
$(2,-3)$ & 16606 & 6805 \\
$(3,-4)$ & 40356 & 7095 \\
$(3,-5)$ & 27478 & 7001 \\
\hline\hline
\end{tabular}
\end{center}
\vspace{-\baselineskip}
\caption{Generated clauses to prove an instance of property
\textbf{M2$'$} depending on weights $(x,y)$ for the ordering relation on time.}
\label{weighting}
\end{figure}

\newcommand{\lorp}{\,\lor\,}

One possibility to improve the performance is by using term ordering
strategies. This will be explained below, but first we give a
motivating example why this is particularly useful for this class of
problems. Consider the following example
taken from \cite{areces2000}. Suppose we have the formula $\textsf{G} (p
\rightarrow \textsf{F} p)$. Proving this satisfiable amounts to
proving that the following two clauses are satisfiable:
\begin{enumerate}
\item $0 > t_1 \lorp \neg P(t_1) \lorp t_1 \not> f(t_1)$
\item $0 > t_2 \lorp \neg P(t_2) \lorp P(f(t_2))$
\end{enumerate}

It can be observed, that although we have two possibilities
to resolve these two clauses, for example on the $P$ literal, this is
useless because the negative $P$ literal is only bound by the
\textsf{G}-operator while the positive $P$ literal comes from a
formula at a deeper modal depth under the \textsf{F}-operator.
Suppose we resolve these $\neg P(t_1)$ and $P(f(t_2))$ and rename
$t_2$ to $t$, which generates the clause:
\[0 > f(t) \lorp f(t) \not> f(f(t)) \lorp 0 > t \lorp \neg
P(t) \]
and with (2) again we have:
\[0 > f(f(t)) \lorp f(f(t)) \not> f(f(f(t))) \lorp 0 > f(t)
\lorp c > t \lorp \neg P(t) \]
etc.
In this way, we can generate many new increasingly lengthy clauses.
Clearly,
these nestings of the Skolem functions will not help to find a a
contradiction more quickly if the depth of the modalities in the
formulas that we have is small, as the
new clauses are similar to previous clauses, except that they describe
a more complex temporal structure.

In \otter{} the weight of the clauses determines which clauses are
chosen from the set-of-support and usable list to become parents in a
resolution step. In case the weight of two clauses is the same, there
is a syntactical ordering to determine which clause has precedence.
This is called the Knuth-Bendix Ordering (KBO) \cite{kb1970}. As 
the goal of resolution is to find an empty clause, lighter
clauses are preferred. By default, the weight of a clause is the sum
of all occurring symbols (i.e., all symbols have weight 1) in the
literals. As we
have argued, since the temporal structure of our background knowledge
is relatively simple, nesting Skolem functions will not help to find
such an empty clause. Therefore it can be of use to manually change
the weight of the ordering symbol, which is done in \otter{} by a
tuple $(x,y)$ for each predicate, where $x$ is multiplied by the sum of
the weight of its arguments and is added to $y$ to calculate the new
weight of this predicate. For example, if $x = 2$ and $y = -3$, then
$v > w$ has a total weight of $2+2-3 = 1$, and $f(f(c)) > f(d)$ has
a weight of $2*3 + 2*2 - 3 = 7$.  

See Fig.~\ref{weighting} where we
show results when we applied this for some small values for $x$ and
$y$ for both binary and negative hyperresolution.  What these numbers
show (similar results were obtained for the other property) is
that the total weight of the ordering predicate should be smaller than
the weight of other, unary, predicates.  Possibly
somewhat surprisingly, the factor $x$ should not be increased too
much. Furthermore, in the case of a negative hyperresolution strategy the effect
is minimal. 

\subsection{Disproofs}
\label{disproofs}

\mace{} (Models And CounterExamples) is a program that
searches for small finite models of first-order statements using a
Davis-Putman-Loveland-Logemann decision procedure
\cite{davis60,davis62} as its core. Because of the relative simplicity
of our temporal formulas, it is to be expected that counterexamples
can be found rapidly, exploring only few states. Hence, it could be
expected that models are of the same magnitude of complexity as in the
propositional case and this was indeed the case. In fact, the
countermodels that \mace{} found consist of only 2 elements in the
domain of the model.

The first property we checked corresponded to checking whether the
background knowledge augmented with patient data and a therapy was
consistent, i.e., criterion \textbf{(M1$'$)}. Consider a
patient with hyperglycaemia due to nearly exhausted B~cells. We 
used \mace{} to verify:
\[
\begin{array}{l}
\ST_{0}({\cal B}_{\mbox{\scriptsize DM2}} \cup \textsf{G}\ T \cup
  \{\textit{capacity}(\textit{b-cells},\textit{insulin}) =
  \textit{exhausted}\} \; \cup \\[1ex]
\;\;\;\;\{\textsf{H}\,
  \textrm{Condition}(\textit{hyperglycaemia})\}) \nvdash \bot
\end{array}
\]
for  $T =
\{\textrm{Drug}(\textrm{SU}),\textrm{Drug}(\textrm{BG}),\textrm{Drug}(\textrm{insulin})\}$.
From this it follows that there is a model if $T =
\{\textrm{Drug}(\textrm{SU}),\textrm{Drug}(\textrm{BG})\}$ and
consequently we have verified \textbf{(M1$'$)}.
\begin{figure}
\begin{tabular}{l}
\hline\hline
\end{tabular}
\begin{verbatim}
     > :                   Condition(hyperglycaemia) :
         t | 0 1                  t 0 1
         ---+----                -------
         0 | F T                    T T
         1 | F F

   Drug(SU):                   Drug(BG) :
      t 0 1                       t  0 1
    -------                       -------
        F F                          T F

   capacity(b-cells, insulin) = nearly-exhausted :
         t | 0 1
         -------
             T T
\end{verbatim}
\begin{tabular}{l}
\hline\hline
\end{tabular}
\caption{Snippet from a \mace{} generated model. It lists the truth
value of all the unary predicates given each element of the domain
(i.e., the time points `\texttt{0}' and `\texttt{1}')
and every combination of domain elements for the binary predicate $<$.
Truth values are denoted by \texttt{T} (true) and \texttt{F} (false).}
\label{macemodel}
\end{figure}

Similarly, we found that for all $T \subset 
\{\textrm{Drug}(\textrm{SU}),\textrm{Drug}(\textrm{BG})\}$, it holds
that:
\[
\begin{array}{l}
\ST_{0}({\cal B}_{\mbox{\scriptsize DM2}} \cup \textsf{G}\,T 
\cup \{\textit{capacity}(\textit{b-cells},\textit{insulin}) 
= \textit{nearly-exhausted}\} \\[1ex]
\;\;\;\;\;\cup \; \{\textsf{H}\,
\textrm{Condition}(\textit{hyperglycaemia})\} \\[1ex] 
\;\;\;\;\;\cup \; \{ \neg
\textsf{G}\,\textrm{Condition}(\textit{normoglycaemia})\}) \nvdash \bot
\end{array}
\]
i.e., it is consistent to believe the patient will not have
normoglycaemia if less drugs are applied, which violates
\textbf{(M2)} for these subsets.
So indeed the conclusion was that the treatment complies with
\textbf{(M3$'$)} and thus complies with the criteria of good practice
medicine.  See for example Fig.~\ref{macemodel}, which contains a
small sample of the output that \mace{} generated. The output consists
of a
first-order model with two elements in the domain, named `0' and `1',
and an interpretation of all predicates and functions in this domain.
It shows that it is consistent with the background knowledge to
believe that the patient will continue to suffer from hyperglycaemia
if one of the drugs is not applied. Note that the model specifies that
biguanide is applied at the first time instance, as this is not
prohibited by the assumptions.

Finally, consider the treatment $T =
  \{\textrm{Drug}(\textrm{SU}),\textrm{Drug}(\textrm{BG}),\textrm{Drug}(\textrm{insulin})\}$
  for a patient with exhausted B~cells, we can show that:
\[
\begin{array}{l}
\ST_{0}({\cal B}_{\mbox{\scriptsize DM2}} \cup \textsf{G}\, T \cup
  \{\textit{capacity}(\textit{b-cells},\textit{insulin}) =
  \textit{exhausted}\} \; \cup \\[1ex]
  \;\;\;\;\{\textsf{H}
  \textrm{Condition}(\textit{hyperglycaemia})\}\;\cup \\[1ex]
\;\;\;\;
\{\textsf{G}(\textrm{Condition}(\textit{normoglycaemia})))\})
\nvdash \bot
\end{array}
\]
so the patient may be cured with insulin treatment, even though
this is not guaranteed as $\textrm{Condition}(\textit{normoglycaemia})$
does not deductively follow from the premises.
However, it is possible to prove the same property when $T =
\{\textrm{Drug}(\textrm{insulin})\}$ and thus \textbf{(M3$'$)} does
not hold in this case and as a consequence the guideline does not
comply with the quality requirements as discussed in Section
\ref{section:qualitycheck}.

\subsection{Plan Structure}
\label{section:verifplans}

So far, we have not considered the order in which treatments are being
considered and executed.
In this subsection, we look at the problem of reasoning about the order of
treatments described in the treatment plan listed in Fig.~\ref{db2man}.

\subsubsection{Formalisation}
\label{section:form}

In order to reason about a sequence of treatments, additional
formalisation is required. The background knowledge was developed for
reasoning about an individual treatment, and therefore, is parameterised
for the treatment that is being applied. We postulate
$\mathcal{B}_{\mbox{\scriptsize DM2}}$, parameterised by $s$, where $s$ is a
certain step in the protocol, i.e., $s = 1, 2, 3, 4$ (cf.
Fig.~\ref{db2man}; for example $s = 1$
corresponds to diet). The first axiom is then described by:
\[
\forall s\;(\textsf{G}\,\textrm{Drug}(\textit{insulin},s) \rightarrow
\textsf{G}(\textit{uptake}(\textit{liver},\textit{glucose},s) =
\textit{up}))
\]
The complete description of this background knowledge is 
denoted by $\mathcal{B}'_{\mbox{\scriptsize DM2}}$
The reason for this is that the `$\textsf{G}$' modality ranges over 
the time period of an individual treatment, rather than the complete
time frame.
Similarly, the patient can be described, assuming the description of
the patient description does not change,
by $\forall s\; P(s)$, where $P$ is a parameterised description
of the patient. For example, in diabetes, it may be assumed
that the Quetelet index does not change; however, the condition
generally does change due to the application of a treatment.

The guideline as shown in Fig.~\ref{db2man} is modelled in two
parts. First, we need to specify which treatment is administered in
each step of the protocol. Second, the transition of one step
to the next has to be specified.
The former is modelled as a conjunction of treatments for each
step of the guideline. For example, in the initial treatment 
step (i.e., step 1) only `diet' is applied, hence, the following
is specified:
\[
\textsf{G}\,\textit{diet}(1)
\]
In general, for treatment $T(s)$ in step $s$, we write $\textsf{G} T(s)$.  Here
$s$ is a meta-variable standing for the step in the protocol, i.e., it is a
ground atom in the concrete specification of the protocol. Object-level
variables can be recognised by the fact that they are bounded by
quantification. For example, $T(s)$ is a ground term in the actual
specification, while $\forall s\;T(s)$ is not. 
In this notation, we will refer to the set of treatment prescriptions for each
step and all patient groups $P(s)$ as $\mathcal{D} = \bigcup_{s} P(s)
\rightarrow \textsf{G}\,T(s)$.

The second part of the formalisation concern the change of
treatments, which is formalised by a predicate $\textit{control}(s)$
that describes which step of the guideline will be reached. 
Recall from Fig.~\ref{db2man}, that
treatments are stopped in case they fail, i.e., when they do not
result in the desired effect. 
This change
of control can be described in the meta-language as:
\begin{equation}
\mathcal{B} \cup \textsf{G}\,T(s) \cup P(s) \not\models N(s) \Rightarrow
\textit{control}(s+1)
\label{eq:controlaxiom}
\end{equation}
for all steps $s$, i.e., if the intention cannot be deduced, then we move to a 
subsequent step. We will refer to this axiom as the \emph{control
axiom} $\mathcal{C}$. It is not required that the control is 
mutually exclusive: if $\textit{control}(s+1)$ holds, then $\textit{control}(s)$
also holds, although the converse is not necessarily true. 
Note that $\neg N(s)$ cannot be deduced from the background
knowledge, due to its causal nature; however, clearly, in the context of automatic
reasoning, it is useful to reason about the theory deductively. To 
be able to do this, one can use the
so-called completed theory, denoted as COMP($\Gamma$), where $\Gamma$
is some first-order theory. The COMP function is formally defined
in \cite{clark78} for general first-order theories. For propositional
theories one can think of this function as replacing implication with
bi-implications, for example, COMP($p \rightarrow q$) = $p \leftrightarrow
q$ and COMP(\{$p \rightarrow q$, $p \rightarrow r$\}) = $p \leftrightarrow 
(q \lor r)$. By the fact that the temporal formulas can be interpreted
as first-order sentences, we have for example:
\[
\begin{array}{ll}
\ \ \ \ &
\textup{COMP}(\textsf{G}\,\textrm{Drug}(\textit{insulin}) \rightarrow
\textsf{G}\,\textit{uptake}(\textit{liver},\textit{glucose}) =
\textit{up}) \\
& = \textsf{G}\,
\textrm{Drug}(\textit{insulin}) \leftrightarrow
\textsf{G}\,\textit{uptake}(\textit{liver},\textit{glucose}) =
\textit{up}
\end{array}
\]
This can be extended for the whole set of axioms of diabetes.
The relevance of this operator for this chapter, is that abductive
reasoning can be seen as deductive reasoning in this completed theory
\cite{console91relationship}. 
In the following section,  we introduce an
extension to this idea for the restricted part of temporal logic
described in Section~\ref{abductive}.  These results are based on a
direct application of work done by St\"ark \cite{staerk94}.  Then, we will apply 
those results to the above formalisation.

\subsubsection{Completion}
\label{section:completion}

An important resolution strategy is SLD resolution which is linear
resolution with a selection function for Horn clauses, i.e., clauses
with at most one positive literal (for a
definition see for example \cite{lucas04}). SLD resolution is sound
and refutation complete for Horn clause logic. It is refutation complete in the sense that
if one would use a breadth-first strategy through the tree of all SLD
derivations, a finite SLD refutation will be found if the set of Horn
clauses is unsatisfiable. Below, as a convenience, we will write that we derive
$\psi$ from $\varphi$ using SLD resolution iff there is an SLD refutation from 
$\varphi \land \neg \psi$.

SLDNF resolution augments SLD resolution with a so-called `negation as
failure' (NAF) rule \cite{clark78}. The idea is in order to prove
$\neg A$, try proving $A$; if the proof succeeds, then the evaluation
of $\neg A$ fails; otherwise, if $A$ fails on every evaluation path,
then $\neg A$ succeeds. The latter part of this strategy is not a
standard logical rule and could be described formally as, given some theory $\Gamma$, if
$\Gamma \not\vdash A$ then $\Gamma \vdash \neg A$ is concluded. It
must be noted that the query $A$ must be grounded. This
type of inference is featured in logic programming
languages such as \textsc{prolog}, although most implementations 
also infer the negation as failure for non-ground goal clauses.

This type of resolution is used here to show that a completed theory can
be used in a deductive setting to reason about the meta-theory. In
particular, in \cite{staerk94}, this is used to show that a certain
class of programs have the property that if a proposition deductively
follows from that program, then there is a successful SLDNF
derivation. This is shown by so-called input/output specifications,
which are given by a set of mode specifications for every predicate. A
mode specification for a predicate says which arguments are
\textsf{input} arguments and which arguments are \textsf{output}
arguments; other arguments are called \textsf{normal} arguments. Given
an input/output specification a program must be written in such a way
that in a computed answer the free variables of the output terms are
contained in the free variables in the input terms. Furthermore, the
free variables of a negative literal must be instantiated to ground
terms during a computation. For example, the following well-known logic program
\begin{eqnarray*}
&&\textsf{append}([],L,L). \\
&&\textsf{append}(L_1,L_2,L_3) \rightarrow \textsf{append}([X|L_1],L_2, [X|L_3]).
\end{eqnarray*}
has two mode specifications.
Either the first two arguments are input arguments resulting in a
concatenation of the two lists in the output argument, or, the first
two arguments can act as output arguments resulting in the
decomposition of the third argument into two lists. 

In the following, we will write all ground atoms without arguments, e.g.,
we denote $A$ when we mean $A(c)$, where $c$ is some constant, unless the
constant is relevant.
We then prove the following lemma.
\begin{lemma}
If $\textup{COMP}(\Gamma) \models \neg A_g$, where 
$\Gamma$ is a formula of the form:
\[
\forall s \forall t \;(A_0(s)\land\cdots\land A_n(s)\land A_{n+1}(t,s)\land\cdots\land
A_{m}(t,s) \rightarrow A_k(t,s))
\]
where $A_i$ are all positive atoms and $A_g$ is any
ground atom, then
there exists an SLDNF derivation of $\neg A_g$ for 
theory $\Gamma$. 
\label{lemma:correctS}
\end{lemma}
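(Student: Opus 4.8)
The plan is to recognise $\Gamma$ as a definite logic program and then to apply St\"ark's completeness result for SLDNF resolution, after checking that the clause shape of $\Gamma$ together with a suitable mode specification meets the input/output conditions recalled above. First I would observe that each formula of $\Gamma$ is a universally quantified clause with exactly one positive head literal $A_k(t,s)$ and a conjunction of positive body atoms; since the $A_i$ are all positive, $\Gamma$ contains no negation and is a genuine definite (Horn) program, for which SLD resolution is sound and refutation complete.

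The decisive structural observation is that every variable occurring in a clause of $\Gamma$ already occurs in its head: the body atoms use only $s$ and $t$, and both of these appear in $A_k(t,s)$. I would therefore fix the mode specification in which every argument position is an \textsf{input} argument. Under this choice the two input/output conditions stated above are met: there are no output terms whose free variables could escape the input variables, and, because the query $\neg A_g$ has the ground atom $A_g$ as its only negative literal, the groundness requirement on negative literals holds at the start. Resolving the ground goal $A_g$ against a head $A_k(t,s)$ forces $s$ and $t$ to ground terms, so each body atom $A_i(s)$ or $A_j(t,s)$ of the resolvent is ground as well; by induction on derivation length, every goal descending from $A_g$ stays ground, and the negation-as-failure rule remains applicable throughout.

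Having matched $\Gamma$ to St\"ark's class, I would invoke his theorem directly: for programs respecting the input/output specification, SLDNF is complete with respect to the Clark completion, so from $\textup{COMP}(\Gamma) \models \neg A_g$ one obtains a finitely failed SLD tree for $A_g$ and hence a successful SLDNF derivation of $\neg A_g$, which is what the lemma claims. The soundness of SLD on the Horn part of $\Gamma$, combined with the groundness just established, ensures that the completion consequence $\textup{COMP}(\Gamma) \models \neg A_g$ is exactly the hypothesis St\"ark's theorem consumes.

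The main obstacle I anticipate is not the Horn bookkeeping but guaranteeing that the failure of $A_g$ is \emph{finite} rather than merely nonterminating, since negation as failure can conclude $\neg A_g$ only from a finitely failed tree. This is where the ground-propagation property does the real work: because each step instantiates the head completely and introduces no fresh variables, the atoms reachable from $A_g$ are tightly controlled, and it is precisely St\"ark's input/output hypotheses that exclude the infinite failing derivations that would otherwise block the step. Confirming that the specific form of $\Gamma$ --- with the two argument groups $A_i(s)$ and $A_j(t,s)$ --- lands inside the class for which St\"ark proves this finiteness, and aligning his mode conditions with that form, is the delicate part of the argument.
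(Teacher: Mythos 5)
Your proof is correct and follows essentially the same route as the paper: both reduce the lemma to St\"ark's completeness theorem for SLDNF by exhibiting an input/output specification under which the definite program $\Gamma$ and the goal $\neg A_g$ are $S$-correct. The only difference is the choice of modes --- the paper declares every argument position \textsf{normal}, which makes the correctness conditions vacuous, whereas your all-\textsf{input} choice also works but additionally relies on the (true) observations that every body variable occurs in the head and that $A_g$ is ground.
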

A proof can be found in Appendix \ref{appendixa}. Note here
that $\Gamma$ only contains Horn clauses. 
Further note that the relation between the completed theory and SLDNF
derivation holds for a much more elaborate class of
formulas \cite{staerk94}. Hence, this result could be generalised 
to a more elaborate temporal descriptions. 
However, the fact that we are dealing with Horn clauses yields the
following property, which is the main result of this section.
\begin{theorem}
If $\Gamma$ is in the form as assumed in
Lemma \ref{lemma:correctS}, $A$ is again any
ground atom, and it holds that $\textup{COMP}(\Gamma) \models \neg A$, then
$\Gamma \not\models A$.
\label{theorem:completion}
\end{theorem}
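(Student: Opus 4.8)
The plan is to chain the lemma just established together with the soundness and refutation completeness of SLD resolution for Horn clauses. The conceptual point is that $\textup{COMP}(\Gamma) \models \neg A$ is a statement about the \emph{completed} theory, whereas the conclusion $\Gamma \not\models A$ concerns the original theory $\Gamma$; the bridge between the two is the operational behaviour of SLDNF resolution, which is sound with respect to the completion and yet reduces the negative goal $\neg A$ to a \emph{failed} computation on $\Gamma$ alone.

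First I would invoke Lemma \ref{lemma:correctS}. Since $\Gamma$ has exactly the assumed form and $A$ is a ground atom, the hypothesis $\textup{COMP}(\Gamma) \models \neg A$ immediately yields an SLDNF derivation of $\neg A$ for the theory $\Gamma$. Next I would unfold what such a derivation means operationally. Because $A$ is ground, the negation-as-failure rule is applicable, and an SLDNF derivation of $\neg A$ succeeds precisely when the attempt to derive the positive goal $A$ \emph{finitely fails}, i.e. the SLD tree for $A$ is finitely failed. In particular there is no successful SLD derivation of $A$, that is, no SLD refutation from $\Gamma \land \neg A$ in the terminology fixed above.

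Finally I would appeal to the refutation completeness of SLD resolution for Horn clauses, noting (as remarked after the lemma) that $\Gamma$ consists only of Horn clauses, so that $\Gamma \cup \{\neg A\}$ is again a set of Horn clauses. Refutation completeness asserts that an unsatisfiable set of Horn clauses always admits an SLD refutation; contrapositively, the absence of an SLD refutation from $\Gamma \land \neg A$ forces $\Gamma \cup \{\neg A\}$ to be satisfiable. Since $\Gamma \models A$ holds iff $\Gamma \cup \{\neg A\}$ is unsatisfiable, satisfiability of $\Gamma \cup \{\neg A\}$ is exactly $\Gamma \not\models A$, which is the desired conclusion.

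The main obstacle I anticipate is the care required at the operational step: one must justify that an SLDNF \emph{derivation} of the ground negative literal $\neg A$ genuinely corresponds to \emph{finite failure} of the positive goal $A$, and that finite failure is strong enough to rule out \emph{any} SLD refutation of $A$ (not merely to leave its existence open). The groundness hypothesis on $A$ is precisely what licenses the negation-as-failure step, and the restriction to Horn clauses is precisely what licenses the use of refutation completeness; both hypotheses are supplied by the statement and by Lemma \ref{lemma:correctS}, so no further assumptions are needed.
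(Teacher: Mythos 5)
Your proposal is correct and follows essentially the same route as the paper's own proof: apply Lemma \ref{lemma:correctS} to obtain an SLDNF derivation of $\neg A$, unpack this (via negation as failure on the ground atom $A$) into failure of every SLD derivation of $A$ from $\Gamma$, and then invoke refutation completeness of SLD resolution for Horn clauses to conclude $\Gamma \not\models A$. Your version is in fact slightly more carefully articulated at the operational step, where the paper's case split (``either $\neg A$ holds by SLD resolution or a derivation for $A$ fails'') is replaced by the direct observation that the NAF rule reduces the ground negative goal to finite failure of the positive one.
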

\begin{proof}
Suppose $\textup{COMP}(\Gamma) \models \neg A$. 
Then by Lemma \ref{lemma:correctS} it holds that 
$\neg A$ is derived by SLDNF resolution from $\Gamma$.
From the definition of SLDNF derivation either $\neg A$ 
holds by SLD resolution
or a derivation for $A$ fails.
In either way, it follows from the soundness of SLD resolution 
that deriving $A$ from $\Gamma$ using SLD resolution will fail.
Since each of the clauses is Horn and SLD resolution is complete for these
Horn clauses, it follows that  $\Gamma \not\models A$. 
\end{proof}

\subsubsection{Implementation}
\label{section:impl}

The result of Theorem \ref{theorem:completion} is used to investigate 
the completion of a restricted subset
of temporal logic. To simplify matters, we introduce the following 
assumptions. First, the \textsf{H} operator is omitted. In this case,
this is justified as this operator only plays a role to denote the
fact that the patient suffers from hyperglycaemia and plays no
role in the temporal reasoning. Hence, we have a (propositional) variable
that expresses exactly the fact that in the past the condition was hyperglycaemic.
Second,
as there is no reasoning about the past, we may translate $\textsf{G} \varphi$
to $\forall t\; \varphi(t)$. Finally, we only
make a distinction between whether the glucose level is decreasing or not, i.e.,
we abstract from the difference between normo- and hypoglycaemia.
Furthermore, we assume that the mutual exclusion of values for
capacity is omitted and part of the description of the patient, i.e.,
a patient with $\textup{QI} > 27$ is now described by $\{ \textup{QI} > 27, 
\neg (\textup{QI} \leq 27) \}$.
We will refer to these translation assumptions in addition to
the translation to first-order logic described in
Section~\ref{trans.object} as $\textsf{ST}_{t}'$. Furthermore, let
$\textrm{COMP}(\Gamma)$ be understood as the formula which is equivalent
according to $\ST$ to
$\textrm{COMP}(\textsf{ST}_{t}'(\Gamma))$ whenever $\Gamma$ is a theory in
temporal logic.
Note that this abstraction is sound, 
in the sense that 
anything that is proven with respect to the condition of the patient
by the abstracted formulas can be proven from
the original specification.

\begin{figure}
\begin{center}
\begin{tabular}{l@{\hspace{3cm}}l}
\hline\hline
\textbf{Temporal Logic} & \textbf{First-order Logic} \\
\hline
$A_1\land\cdots\land A_n \land \textsf{G}\, A_{n+1} \land$
& $\forall t\; (A_1\land\cdots\land A_n\, \land
A_{n+1}\land$ \\
\ \ $\cdots\land
\textsf{G}\, A_m \rightarrow \textsf{G}\, A_i$ 
& \ \ $\cdots\land A_m
\rightarrow A_i(t))$
\\

$\textsf{G} (A_1 \land\cdots\land A_n \rightarrow A_i)$ 
& $\forall t\; (A_{1}(t)\land\cdots\land A_{n}(t) \rightarrow A_i(t))$
\\
$\textsf{G}\,A_i$, $A_i$ 
& $A_i(t)$, $A_i$
\\
$\neg \textsf{G}\,A_i$ 
& $\neg A_i$
\\
\hline\hline
\end{tabular}
\end{center}
\vspace{-\baselineskip}
\caption{The type of temporal formulas and their translation, where
the Skolem constants describing time instances are omitted.}
\label{figure:clausetypes}
\end{figure}

Let $p_i$ be a patient characteristic, $d$ a drug, and $l_i$
either a patient characteristic or drug. 
The temporal formulas that are allowed are listed in
Fig.~\ref{figure:clausetypes}. 
We claim that each temporal formula is an instance of a temporal
formula mentioned in Fig.~\ref{figure:clausetypes}, universally quantified
by a step $s$, except for the last goal clause which is grounded. The
background knowledge can be written in terms of the first and second clause, 
taken into
account that axiom (7) can be rephrased to two clauses of the first
type and we need to make sure that each literal is coded
as a positive atom. This is a standard translation procedure that can
be done for many theories and is described in e.g.,
\cite[p.~23]{Shepherdson:1987DDLP}. Axiom (3) needs to be rewritten for each of
the cases of capacity implied by the negated sub-formula. For each
drug and patient characteristic in the hypothesis, the third clause of 
Fig.~\ref{figure:clausetypes} applies. A goal is an
instance of the fourth clause of Fig.~\ref{figure:clausetypes}.
As the first three clauses are Horn, Theorem~\ref{theorem:completion} can 
be instantiated for the background knowledge, which yields:
\begin{theorem}
$\textup{COMP}({\cal B}'_{\mbox{\scriptsize DM2}} \cup
\textsf{G}\,T(s) \cup P(s)) \models \neg
N(s)$ implies
$\mathcal{B}'_{\mbox{\scriptsize DM2}} \cup P(s) \cup \textsf{G} T(s) \not\models 
N(s)$.
\label{theorem:completioninstant}
\end{theorem}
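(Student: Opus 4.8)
The plan is to recognise the statement as nothing more than an instantiation of Theorem~\ref{theorem:completion}, specialised to the concrete diabetes theory. Concretely, I would set $\Gamma := {\cal B}'_{\mbox{\scriptsize DM2}} \cup \textsf{G}\,T(s) \cup P(s)$ and $A := N(s)$, so that the hypothesis $\textup{COMP}(\Gamma) \models \neg N(s)$ and the desired conclusion $\Gamma \not\models N(s)$ are exactly the premise and consequent of Theorem~\ref{theorem:completion}. All that then remains is to discharge the two side-conditions of that theorem for this particular $\Gamma$ and $A$: that $\Gamma$, read through the translation $\ST_t'$, consists solely of clauses of the Horn form assumed in Lemma~\ref{lemma:correctS}, and that $N(s)$ is a ground atom.

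First I would verify the syntactic shape of each part of $\Gamma$ against Fig.~\ref{figure:clausetypes}. Under $\ST_t'$ each axiom of ${\cal B}'_{\mbox{\scriptsize DM2}}$ falls under the first or second clause schema, which -- once the $\textsf{G}$ modalities are read as $\forall t$ and the step parameter as $\forall s$ -- is precisely the implication $\forall s\,\forall t\,(A_0(s)\land\cdots\land A_n(s)\land A_{n+1}(t,s)\land\cdots\land A_m(t,s)\rightarrow A_k(t,s))$ of Lemma~\ref{lemma:correctS}, hence Horn. The treatment prescriptions $\textsf{G}\,T(s)$ and the patient characteristics $P(s)$ instantiate the third (fact) schema, again Horn. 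Since $N(s)$ is a goal of the fourth schema, its translation is a single (ground) atom, which settles the second side-condition.

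The part demanding genuine care -- and the main obstacle -- is confirming that the axioms as originally written in Section~\ref{bgknowledge} really can be brought into this Horn shape, since two of them are not literally Horn. Axiom~(7) carries a disjunction $\textsf{G}\,\textit{release}(\ldots)=\textit{down}\lor\textsf{G}\,\textit{uptake}(\ldots)=\textit{up}$ in its antecedent, and axiom~(3) contains the negated subformula $\neg\,\textit{capacity}(\textit{b-cells},\textit{insulin})=\textit{exhausted}$. For~(7) I would split the axiom into two implications, one per disjunct, each of the first clause type; for~(3), using the assumption of Section~\ref{section:impl} that the mutual exclusion of capacity values has been moved into the patient description, I would replace the negated value by the disjunction of the remaining positive values and duplicate the axiom for each case, keeping every antecedent a conjunction of positive atoms. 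Any literal still appearing negatively is then renamed to a fresh positive atom by the standard device of \cite[p.~23]{Shepherdson:1987DDLP}, so that all $A_i$ are positive atoms as Lemma~\ref{lemma:correctS} requires.

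With $\Gamma$ thus confirmed to be in the required Horn form and $N(s)$ a ground atom, the conclusion is immediate: Theorem~\ref{theorem:completion} applied to $A = N(s)$ yields $\Gamma \not\models N(s)$, which is exactly the asserted implication. Finally, since $\textup{COMP}$ of a temporal theory is defined via $\ST_t'$ and the soundness remark at the end of Section~\ref{section:impl} guarantees that nothing about the patient's condition is lost in passing to the abstracted, $\ST_t'$-translated theory, the result transfers back to the original ${\cal B}'_{\mbox{\scriptsize DM2}}$ as stated.
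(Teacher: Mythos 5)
Your proposal is correct and follows essentially the same route as the paper: the paper likewise obtains Theorem~\ref{theorem:completioninstant} as a direct instantiation of Theorem~\ref{theorem:completion}, justified by checking that the translated theory matches the clause schemas of Fig.~\ref{figure:clausetypes}, splitting axiom (7) into two clauses of the first type, rewriting axiom (3) per capacity case, and renaming remaining negative literals via the standard procedure of \cite{Shepherdson:1987DDLP}. Your treatment of the two non-Horn axioms is, if anything, slightly more explicit than the paper's.
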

This states that, if the completed theory implies that the patient will not 
have normoglycaemia, then this is consistent conclusion with respect to the original
specification, for any specific step described by $s$. Therefore, there is no reason to assume that $T$ is the
correct treatment in step $s$. This result is applied to the control
axiom $\mathcal{C}$ as described in Section~\ref{section:form},
i.e., formula \ref{eq:controlaxiom}.
If we were to deduce that 
\[
\textup{COMP}(\mathcal{B} \cup \textsf{G}\, T(s) \cup P(s)) \models
\neg N(s)\]
then, assuming the literals are in a proper form required by
Theorem \ref{theorem:completioninstant}, this implies that
\[
\mathcal{B} \cup \textsf{G}\,T(s) \cup P(s) \not\models N(s)
\]
Thus, we postulate the following axiom describing the
change of control, denoted by $\mathcal{C'}$
\[
\textup{COMP}(\mathcal{B} \land \textsf{G}\,T(s) \land P(s)) \land
\neg N(s) \rightarrow control(s+1)
\]
The axioms $\mathcal{D}$ (cf. Section~\ref{section:form}) and $\mathcal{C'}$ 
are added to the guideline formalisation in order to reason about the
structure of the guideline.

To investigate the quality of the treatment sequence, a choice
of quality criteria has to be chosen. Similarly to individual treatments,
notions of optimality could be studied. Here, we investigate 
the property that for each patient group, the intention should be
reached at some point in the guideline.
For the diabetes guideline, this is formalised as follows:
\[
\mathcal{B}'_{\mbox{\scriptsize DM2}} \cup \mathcal{D} \cup \forall s\;P(s) \models
\exists s\; N(s)
\]
As we restrict ourselves to a particular treatment described
in step $s$, this property is similar to the property proven in
Section~\ref{implementation}.
However, it is possible that the control never reaches $s$ for
a certain patient group, hence,
using the knowledge described in $\mathcal{C}$, it is also 
important to verify that this step is indeed
reachable, i.e.,
\[
\mathcal{B}'_{\mbox{\scriptsize DM2}} \cup \mathcal{D} \cup \forall s\;P(s)) \cup
\mathcal{C'} \models
\exists s\; (N(s) \land \textit{control}(s))
\]

The above was used to verify a number of properties
for different patient groups. For example, assume \[
\begin{array}{ll}
P(s) = & \{
\textit{capacity}(\textit{liver},\textit{glucose},s) =
\textit{exhausted}, \textup{QI}(s) \leq 27, \\
& \textsf{H}\,\textrm{Condition}(\textit{normoglycaemia}) \}
\end{array}
\] 
(note
the \textsf{H} operator is abstracted from the specification) then:
\[
\mathcal{B}'_{\mbox{\scriptsize DM2}} \cup \mathcal{D} \cup \forall_{s}\;P(s) \cup
\mathcal{C'}  \models \textsf{G}\,\textrm{Condition}(\textit{normoglycaemia},3)
\land \textit{control}(3)
\]
i.e., the third step will be reached and in this step the patient will
be cured. This was implemented in \otter{} using the translation as discussed in
the previous subsection. As the temporal reasoning is easier due to
the abstraction that was made, the
proofs are reasonably short. For example, in the example above, the proof 
has length 25 and
was found immediately.

\section{Conclusions}
\label{conclusions}

The quality of guideline design is for the largest part based on its
compliance with specific treatment aims and global requirements.  We
have made use of a logical meta-level characterisation of such
requirements, and with respect to the requirements use was made of the
theory of abductive, diagnostic reasoning, i.e., to diagnose potential
problems with a guideline \cite{lucas97,lucas2003,poole90}.  In
particular, what was diagnosed were problems in the relationship
between medical knowledge, and suggested treatment actions in the
guideline text and treatment effects; this is different from
traditional abductive diagnosis, where observed findings are explained
in terms of diagnostic hypotheses. This method allowed us to examine
fragments of a guideline and to prove properties of those fragments.
Furthermore, we have succeeded in proving a property using the
structure of the guideline, namely that the blood glucose will go
down eventually for all patients if the guideline is followed
(however, patients run the risk of developing hypoglycaemia).

In earlier work \cite{HommersomIEEE2007}, we used a tool for interactive
program verification, named KIV \cite{reif95}, for the purpose of
quality checking of the diabetes type 2 guideline. Here, the main
advantage of the use of interactive theorem proving was that the
resulting proofs were relatively elegant as compared to the solutions
obtained by automated resolution-based theorem proving. This may be
important if one wishes to convince the medical community that a
guideline complies with their medical quality requirements and to
promote the implementation of such a guideline.  However, to support
the \emph{design} of guidelines, this argument is of less importance.
A push-button technique would there be more appropriate.  The work
that needs to be done to construct a proof in an interactive theorem
prover would severely slow down the development process as people with
specialised knowledge are required. 

Another method for verification that is potentially useful is model
checking. One advantage is that it allows the end user, in some cases,
to inspect counter example if it turns out that that a certain quality
requirement does not hold. The main disadvantage is that the domain
knowledge as we have used here is not obviously represented into a
automaton, as knowledge stated in linear temporal logic usually
cannot succinctly be translated to such a model.

One of the main challenges remains bridging the gap between guideline
developers and formal knowledge needed for verification.  The
practical use of the work that is presented here depends on such
developments, although there are several signs that these developments
will occur in the near future. Advances in this area have been made in
for example visualisation \cite{KM2001} and interactive execution of
guideline representation languages.  Furthermore, the representation
that we have used in this paper is conceptually relatively simple
compared to representation of guidelines and complex temporal
knowledge discussed in for example \cite{shahar00modelbased}, however,
in principle all these mechanisms could be formalised in first-order
logic and could be incorporated in this approach. 
Similarly, probabilities have been ignored in this paper, for which
several probabilistic logics that have been proposed in the last
couple of years seem applicable in this area 
\cite{Richardson:2006ML,Kersting:2000ILP}. Exploring other types
of analysis, including quantitative and statistical, could
have considerable impact,
as we are
currently moving into an era where guidelines are evolving into highly
structured documents and are constructed more and more using
information technology. It is not unlikely that the knowledge itself
will be stored using a more formal language. Methods for
assisting guideline developers looking into the quality of clinical 
guidelines, for example, using automated verification will then be useful.

\subsubsection*{Acknowledgement}
This work was partially supported by the European Commission's IST
program, under contract number IST-FP6-508794 (Protocure II project).

\appendix

\section{Proof of Meta-level Property (M2)}
\label{firstproof}

In the formulas below, each literal is augmented with a time-index.
These
implicitly universally quantified variables are denoted as $t$ and
$t'$. Recall that $g(x,y) = \textit{down}$ is implemented as $\neg
(g(x,y) = \textit{up})$ and functions $f$ and $f'$ are Skolem
functions introduced by \otter. Both Skolem functions map a time point
to a later time point. Consider the following clauses in the usable
and set-of-support list:

\begin{description}
\item[\texttt{ \ \ \ 2}] $\textit{capacity}(\textit{b-cells},
\textit{insulin},t) \neq
\textit{nearly-exhausted}\;\lor$\\
$\textit{\ \ \ capacity}(\textit{b-cells},
\textit{insulin}, t) \neq \textit{exhausted}$

\item[\texttt{ \ \ 14}] $t \not> f(t) \lor
  \textit{capacity}(\textit{b-cells},\textit{insulin},t) = 
  \textit{exhausted}\lor t > t'\, \lor$\\
  $\textit{\ \ \ secretion}(\textit{b-cells},
  \textit{insulin}, t') = \textit{up}$

\item[\texttt{ \ \ 15}] $\neg \textup{Drug}(\textup{SU}, f(t)) \lor
  \textit{capacity}(\textit{b-cells},\textit{insulin}, t) = 
  \textit{exhausted} \lor t > t'\, \lor$\\
  $\textit{\ \ \ secretion}(\textit{b-cells}, \textit{insulin}, t') = \textit{up}$
\item[\texttt{ \ \ 51}] $0 > t \lor \textup{Drug}(\textup{SU},t)$
\item[\texttt{ \ \ 53}] $
\textit{capacity}(\textit{b-cells},\textit{insulin}, 0) =
  \textit{nearly-exhausted}$
\end{description}
For example, assumption (53)
models the capacity of the B~cells, i.e., nearly exhausted at time $0$ 
where the property as shown above should be refuted. Note that
some of the clauses are introduced in the translation to propositional
logic, for example assumption (2) is due to the fact that 
that values of the capacity are mutually exclusive. This is
consistent
with the original formalisation, as functions map to unique elements
for element of the domain.

Early in the proof, \otter{} deduced that if the capacity
of insulin in B~cells is nearly-exhausted, then it is not completely
exhausted:
\begin{description}
\item[\texttt{ \ \ 56}] \texttt{[neg\_hyper,53,2]} $ 
  \textit{capacity}(\textit{b-cells},\textit{insulin}, 0) 
  \neq \textit{exhausted}$
\end{description}
Now we skip a part of the proof, which results in information about
the relation between the capacity of insulin and the secretion of 
insulin in B~cells at a certain time point:
\begin{description}
\item[\texttt{ \ 517}] \texttt{[neg\_hyper,516,53]} $0 \not>
f'(0)$
\item[\texttt{ \ 765}] \texttt{[neg\_hyper,761,50,675]} \\
  $\textit{\ \ \ \ \ \ \ \ \ \ \ \ capacity}(\textit{b-cells},
  \textit{insulin}, f'(0))
  \neq \textit{nearly-exhausted} \lor \\
  \textit{\ \ \ \ \ \ \ \ \ \ \ \ secretion}(\textit{b-cells},
  \textit{insulin}, f'(0)) =
  \textit{down} $
\end{description}

This information allows \otter{} to quickly complete the proof,
by combining it with the information about the effects of a 
sulfonylurea drug:
\begin{description}
\item[\texttt{766}] \texttt{[neg\_hyper,765,15,56,517]} \\
$\textit{\ \ \ \ \ \ \ \ \ \ \ \ capacity}(\textit{b-cells}, \textit{insulin}, f(0)) \neq
\textit{nearly-exhausted}\;\lor$\\ 
$\textit{\ \ \ \ \ \ \ \ \ \ \ \ }\neg\textup{Drug}(\textup{SU},
f'(0))$
\item[\texttt{767}] \texttt{[neg\_hyper,765,14,56,517]} \\
$\textit{\ \ \ \ \ \ \ \ \ \ \ \ capacity}(\textit{b-cells}, \textit{insulin}, f(0)) \neq
\textit{nearly-exhausted}\;\lor $\\
$\textit{\ \ \ \ \ \ \ \ \ \ \ \ }
 0 \not> f(0)$
\end{description}
after which (53) can be used as a nucleus to yield:
\begin{description}
\item[\texttt{ 768}] \texttt{[neg\_hyper,767,53]} $0 \not> f_1(0)$
\end{description}
and consequently by taking (51) as a nucleus, we find that at time
point 0 the capacity of insulin is not nearly exhausted:
\begin{description}
\item[\texttt{ 769}] \texttt{[neg\_hyper,768,51,766]} $ \\
\textit{\ \ \ \ \ \ \ \ \ \ \ \ capacity}(\textit{b-cells},\textit{insulin}, 0) \neq
\textit{nearly-exhausted}
$
\end{description}
This directly contradicts one of the assumptions and this results in 
an empty clause:
\begin{description}
\item[\texttt{ 770}] \texttt{[binary,769.1,53.1]} $\bot$
\end{description}

\section{Proof of Lemma 1}
\label{appendixa}

Let $\Gamma$ and $\Pi$ denote lists of literals. An $n$-tuple
$(x_1,\ldots,x_n) \in
\{\textsf{in},\textsf{out},\textsf{normal}\}^{n}$ is called
a \emph{mode specification} for an $n$-place relation symbol $R \in
Rel$, denoted by $\alpha,\beta,\gamma$. The set of \emph{input variables} of the atom
$R(t_1,\ldots,t_n)$ (where $t_i$ is a term) given a mode specification 
is defined by:
\[
in(R(t_1,\ldots,t_n), (x_1,\ldots, x_n)) = \bigcup\{ vars(t_i) \mid
1 \leq i \leq n, x_i = \textsf{in} \}.
\]
Analogously, the set of \emph{output variables} is given by
\[
out(R(t_1,\ldots,t_n), (x_1,\ldots, x_n)) = \bigcup\{ vars(t_i) \mid
1 \leq i \leq n, x_i = \textsf{out} \}.
\]
An input/output specification is a function $S$ which assigns to
every $n$-place relation symbol $R$ a set $S^{+}(R/n) \subseteq
\{\textsf{in},\textsf{out},\textsf{normal}\}^{n}$ of positive
mode specification and a set $S^{-}(R/n) \subseteq
\{\textsf{in},\textsf{normal}\}^{n}$ of negative mode 
specifications for $R$.

\begin{definition}[Definition 2.1 \cite{staerk94}]
A clause $\Pi \rightarrow A$ is called \emph{correct} with respect to an
input/output specification $S$ or \emph{S-correct} iff
\begin{description}
\item[(C1)] for all positive modes $\alpha \in S^{+}(A)$ there exists
a permutation of the literals of the body $\Pi$ of the form $B_1,
\ldots, B_m, \neg C_1, \ldots \neg C_n$ and for all $1 \leq i \leq m$
a positive mode $\beta_i \in S^{+}(B_i)$ such that
\begin{itemize}
\item for all $1 \leq i \leq m$, $in(B_i,\beta_i) \subseteq 
in(A,\alpha) \cup \bigcup_{1 \leq j \leq i} out(B_j, \beta_j)$,
\item $out(A,\alpha) \subseteq in(A,\alpha) \cup \bigcup_{1 \leq
j\leq m} out(B_j, \beta_j)$,
\item for all $1 \leq i \leq n$, \\
$S^{-}(C_i) \neq \varnothing$ and $vars(C_i) \subseteq in(A,\alpha)
\cup \bigcup_{1 \leq j \leq m} out(B_j, \beta_j)$,
\end{itemize}
\item[(C2)] for all negative modes $\alpha \in S^{-}(A)$ for all
positive literals $B$ of $\Pi$ there exists a negative mode $\beta
\in S^{-}(B)$ with $in(B,\beta) \subseteq in(A,\alpha)$ and for all
negative literals $\neg C$ of $\Pi$ there exists a positive mode
$\gamma \in S^{+}(C)$ with $in(C,\gamma) \subseteq in(A,\alpha)$.
\end{description}
\end{definition}
A program $P$ is called \emph{correct} with respect to an input/output
specification $S$ iff all clauses of $P$ are $S$-correct.

\begin{definition}[Definition 2.2 \cite{staerk94}]
A goal $\Gamma$ is called \emph{correct} with respect to an
input/output specification $S$ or \emph{S-correct} iff there exists a
permutation $B_1,
\ldots, B_m, \neg C_1, \ldots \neg C_n$ of the literals of $\Gamma$
and for all $1 \leq i \leq m$ a positive mode $\beta_i \in S^{+}(B_i)$
such that
\begin{description}
\item[(G1)] for all $1 \leq i \leq m$, $in(B_i, \beta_i) \subseteq
\bigcup_{1 \leq j \leq i} out(B_j, \beta_j)$,
\item[(G2)] for all $1 \leq i \leq m$, $S^{-}(C_i) \neq \varnothing$
and $vars(C_i) \subseteq \bigcup_{1 \leq j \leq m} out(B_j, \beta_j)$.
\end{description}
\end{definition}

\begin{theorem}[reformulation of Theorem 5.4 \cite{staerk94}]
Let $P$ be a normal program which is correct with respect to the
input/output specification $S$ and let $L_1, \ldots, L_r$ be a
goal.
\begin{description}
\item[(a)] If $\textup{COMP}(P) \models L_1 \land \ldots \land
l_r$ and $L_1, \ldots, L_r$ is correct with respect to $S$
then there exists a substitution $\theta$ such that there is a
successful SLDNF derivation for $L_1,\ldots,L_r$
with answer $\theta$. (...)
\end{description}
\end{theorem}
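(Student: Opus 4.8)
The plan is to prove this as a completeness result for SLDNF resolution relative to the Clark completion \cite{clark78}, restricted to the mode-correct fragment, and ultimately to read it off as a notational reformulation of St\"ark's Theorem 5.4 \cite{staerk94} using Definitions 1 and 2 above. Since the statement is billed as a \emph{reformulation}, the substance lies not in new invention but in reconstructing the two ingredients behind St\"ark's argument: a no-floundering property supplied by mode-correctness, and a completeness core that turns $\textup{COMP}(P)$-consequence into an actual derivation. I would organise the proof around these two ingredients and then close by matching the vocabulary.

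First I would establish that mode-correctness rules out \emph{floundering}. By induction on the length of an SLDNF derivation starting from an $S$-correct goal $L_1,\ldots,L_r$, I would show that $S$-correctness is preserved under resolution steps, so that every intermediate goal is again $S$-correct. Conditions \textbf{(G1)} and \textbf{(C1)} track how the variables occurring in output positions are produced by the preceding positive literals in the mode-respecting permutation $B_1,\ldots,B_m,\neg C_1,\ldots,\neg C_n$, while \textbf{(G2)} and \textbf{(C2)} force $vars(C_i)$ to lie inside the output variables already generated. Hence, at the moment a negative literal $\neg C$ is selected, $C$ has been instantiated to a ground atom, so the negation-as-failure rule is always applicable and no non-ground negative subgoal is ever reached.

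Second, for the completeness core I would assume $\textup{COMP}(P) \models L_1 \land \cdots \land L_r$ and build the derivation by induction on $r$, processing the literals in the mode order. Because $\textup{COMP}(P)$ is consistent on this program class, its ground consequences are captured by a well-founded fixpoint model $M_P$ of the immediate-consequence operator, and I would assign each ground atom $A$ with $M_P \models A$ the least fixpoint stage at which it enters $M_P$ as its rank. For a positive literal $B$, its input variables are already bound by the preceding computation (via \textbf{(G1)}/\textbf{(C1)}), and classical completeness of SLD resolution relative to $M_P$ yields a successful SLD subderivation producing an answer for the output variables of $B$. For a negative literal $\neg C$, which is ground by the first ingredient, $\textup{COMP}(P) \models \neg C$ gives $M_P \not\models C$; the ground correspondence between non-consequence and negation as finite failure \cite{clark78} then supplies a finitely failed SLD tree for $C$, so the NAF call succeeds. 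Composing these per-literal subderivations along the permutation and accumulating the computed substitutions produces the required $\theta$ together with a single successful SLDNF derivation for $L_1,\ldots,L_r$.

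The main obstacle is the completeness core, and specifically the passage from two-valued $\textup{COMP}(P)$-consequence to membership in a well-founded fixpoint model, together with the \emph{finiteness} of the failed trees invoked for negative subgoals. SLDNF is not complete for the two-valued completion in general; what restores completeness here is precisely that the mode conditions both eliminate floundering and, through St\"ark's level-mapping and termination analysis, guarantee that the SLD trees called for the ground negative literals are finite rather than merely failing. I would therefore lean on that termination machinery to certify finite failure, after which the inductive assembly of the derivation is routine bookkeeping over the mode-respecting order. Having reconstructed these two ingredients, the reformulation follows by reading Definitions 1 and 2 as the concrete instance of St\"ark's input/output hypotheses, so that the statement reduces directly to Theorem 5.4 of \cite{staerk94}.
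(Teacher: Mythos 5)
You should first be aware that the paper contains no proof of this statement at all: it is St\"ark's Theorem 5.4, restated in the paper's notation (hence the label ``reformulation''), and it is used purely as a black box in the proof of Lemma \ref{lemma:correctS}. The paper's entire justification is the citation \cite{staerk94}. Your closing move --- reading Definitions 1 and 2 as the concrete instance of St\"ark's input/output hypotheses and reducing the statement to his Theorem 5.4 --- is therefore exactly what the paper does, and on its own it would have sufficed.

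The reconstruction you attempt before that, however, has genuine gaps, concentrated in the ``completeness core''. First, for a \emph{normal} program the immediate-consequence operator is not monotonic, so there is no least or ``well-founded fixpoint'' model whose ground atoms capture two-valued $\textup{COMP}(P)$-consequence; the well-founded model is three-valued and its semantics genuinely differs from completion semantics (for $P = \{p \leftarrow p\}$ the well-founded model makes $p$ false, while $\textup{COMP}(P)$ is the tautology $p \leftrightarrow p$ and entails neither $p$ nor $\neg p$). Your rank assignment therefore has no foundation. Second, for a ground negative subgoal $\neg C$ you invoke ``the ground correspondence between non-consequence and negation as finite failure'' from \cite{clark78}; but Clark's result is the \emph{soundness} direction (finite failure implies $\textup{COMP}(P) \models \neg C$), and the converse --- that $\textup{COMP}(P) \models \neg C$ yields a \emph{finitely} failed tree --- is the completeness of negation as failure, known for definite programs (Jaffar--Lassez--Lloyd) but precisely the problematic part for normal programs: the subsidiary tree for $C$ may itself contain negative literals, so your per-literal induction is circular, since the finite-failure claim for negative subgoals is the negative half of the very theorem being proved and must be handled by a simultaneous induction on success and failure. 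Third, the appeal to ``St\"ark's level-mapping and termination analysis'' is misattributed: the mode framework of Definitions 1 and 2 contains no level mappings (those belong to the acyclic/acceptable-program line of completeness results), so this cannot be the ingredient that certifies finite failure here. In short, as a self-contained proof the proposal does not go through; as a reduction to \cite{staerk94} it is correct, but then it coincides with the paper's citation rather than adding a proof.
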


Define $S^{+} = S^{-} = \{ (\textsf{normal} \}$
for every unary predicate and $\{ (\textsf{normal}, \textsf{normal}) \}$ for every
binary predicate. Observe that $\Gamma$ contains only definite clauses,
so each condition in Definition 1 is trivially satisfied, thus $\Gamma$
is $S$-correct. Similarly,
as the goal $\psi$ is definite, all clauses of Definition 2 are
trivially satisfied, thus also $S$-correct.  Hence, by
Theorem 3, we find that there is a successful SLDNF derivation of
$\psi$ given $\Gamma$.

\bibliographystyle{acmtrans}
\bibliography{otter}

\end{document}